\theoremstyle{definition}
\newtheorem{definition}{Definition}
\newtheorem{lemma}{Lemma}
\newtheorem{thm}{Theorem}
\icmltitlerunning{}
\begin{document}

\twocolumn[
\icmltitle{AT-GAN: An Adversarial Generator Model \\ for Non-constrained Adversarial Examples} 



\icmlsetsymbol{equal}{*}

\begin{icmlauthorlist}
\icmlauthor{Xiaosen Wang}{equal,hust}
\icmlauthor{Kun He}{equal,hust}
\icmlauthor{Chuanbiao Song}{hust} 
\icmlauthor{Liwei Wang}{pku}
\icmlauthor{John Hopcroft}{corn} 
\end{icmlauthorlist}

\icmlaffiliation{hust}{School of Computer Science and Technology, Huazhong University of Science and Technology, Wuhan, China}
\icmlaffiliation{pku}{School of Electronics Engineering and Computer Sciences, Peking University, Peking, China}
\icmlaffiliation{corn}{Department of Computer Science, Cornell University, NY, USA}

\icmlcorrespondingauthor{Kun He}{brooklet60@hust.edu.cn}
\icmlcorrespondingauthor{Xiaosen Wang}{xiaosen@hust.edu.cn}

\icmlkeywords{Machine Learning, ICML}

\vskip 0.3in
]



\printAffiliationsAndNotice{\icmlEqualContribution} 

\begin{abstract}
Despite the rapid development of adversarial machine learning, most adversarial attack and defense researches mainly focus on the perturbation-based adversarial examples, which is constrained by the input images. In comparison with existing works, we propose non-constrained adversarial examples, which are generated entirely from scratch without any constraint on the input. Unlike perturbation-based attacks, or the so-called unrestricted adversarial attack which is still constrained by the input noise, we aim to learn the distribution of adversarial examples to generate non-constrained but semantically meaningful adversarial examples. Following this spirit, we propose a novel attack framework called AT-GAN (Adversarial Transfer on Generative Adversarial Net). Specifically, we first develop a normal GAN model to learn the distribution of benign data, and then transfer the pre-trained GAN model to estimate the distribution of adversarial examples for the target model. In this way, AT-GAN can learn the distribution of adversarial examples that is very close to the distribution of real data. To our knowledge, this is the first work of building an adversarial generator model that could produce adversarial examples directly from any input noise.
Extensive experiments and visualizations show that the proposed AT-GAN can very efficiently generate diverse adversarial examples that are more realistic to human perception. In addition, AT-GAN yields higher attack success rates against adversarially trained models under white-box attack setting and exhibits moderate transferability against black-box models. 
\end{abstract}

\section{Introduction}
\label{sec:intro}

Deep Neural Networks (DNNs) have exhibited impressive performance on various computer vision tasks~\citep{Krizhevsky2012AlexNet, He2016Resnet}. However, DNNs are also found vulnerable to adversarial examples~\citep{Szegedy2014Intriguing}, which has raised serious concerns on the safety of deep learning models.
For instance, \citet{Eykholt2018Robust} and  \citet{Cao2019Adv} have found that adversarial examples can mislead the self-driving cars to make wrong decisions on traffic signs and obstacles. 

In recent years, numerous works of exploring adversarial examples have been developed, including adversarial attack \citep{Goodfellow2015Explaining, Carlini2017Towards}, adversarial defense \citep{Goodfellow2015Explaining, Kurakin2017Adv, Song2019Improving} and the properties of adversarial examples \citep{He2018Decision, Shamir2019Explanation}. However, most studies mainly focus on the \textit{perturbation-based adversarial examples} constrained by the input images. In contrast, \citet{Song2018Constructing} propose to generate \textit{unrestricted adversarial examples} using Generative Adversarial Net (GAN) \citep{Goodfellow2014GANs} by searching adversarial noise around the input noise. However, their method is still constrained by the input noise, and time-consuming.


In this paper, we propose a novel attack framework called AT-GAN (Adversarial Transfer on Generative Adversarial Net), which aims to learn the distribution of adversarial examples and generate a new type of adversarial examples, called \textit{non-constrained adversarial examples}. Unlike traditional perturbation-based attacks that search adversarial examples around the input image, or the unrestricted adversarial attack~\citep{Song2018Constructing} that searches adversarial noise around the input noise, AT-GAN is an adversarial generator model that could produce semantically meaningful adversarial examples directly from any input noise. This to our knowledge is the first work of this category. 

Specifically, we first develop a normal GAN model to learn the distribution of benign data, then transfer the pre-trained GAN model to estimate the distribution of adversarial examples for the target model. Note that once our generator is transferred from generating normal images to adversarial images, it can directly generate non-constrained adversarial examples for any input random noise, leading to high diversity and efficiency. 
In general, AT-GAN is not only a new way of generating adversarial examples, but also a useful way to explore the distribution of adversarial examples. 


To evaluate the effectiveness of AT-GAN, we develop AT-GAN on three benchmark datasets, MNIST, Fashion-MNIST and CelebA,
 and apply typical defense methods 
to compare AT-GAN with other perturbation-based or unrestricted attack methods.
Empirical results show that the non-constrained adversarial examples generated by AT-GAN yields higher attack success rates, and state-of-the-art adversarially trained models exhibit little robustness against AT-GAN, indicating the high diversity of the generated examples. 
Besides, AT-GAN is more efficient than the gradient-based or query-based adversarial attacks in generating adversarial instances. 

\section{Preliminaries}
\label{sec:background}
In this section, we first give the definitions of different types of adversarial examples, then introduce Generative Adversarial Net (GAN) that will be used in our method, and provide more details of unrestricted adversarial examples which are most related to our work. The related works on typical adversarial attack and defense methods are introduced in Appendix \ref{app:related_work}.

Let \(\mathcal{X}\) be the set of the legal images, \(\mathcal{Y} \in \mathbb{R}\) be the output label space and \(p_z\) be an arbitrary probability distribution, such as Gaussian distribution. The classifier \(f:\mathcal{X} \rightarrow \mathcal{Y}\) takes an image \(x\) and predicts its label \(f(x)\). Suppose \(p_x\) and \(p_{adv}\) are the distributions of benign and adversarial examples respectively.

\subsection{Adversarial Examples}
\label{sec:adv}
Assume we have an oracle \(o:\mathcal{X} \rightarrow \mathcal{Y}\), which could always predict the correct label for any input \(x \in \mathcal{X}\), we can define several types of adversarial examples as follows:

For perturbation-based adversarial examples, tiny perturbations are added to input images, which are undetectable by human but can cause the target classifier to make wrong prediction.

\begin{definition}
\textbf{\textit{Perturbation-based Adversarial Examples.}}
\label{def:per_adv}
Given a subset (trainset or testset) \(\mathcal{T} \subset \mathcal{X}\) and a small constant \(\epsilon > 0\), the perturbation-based adversarial examples can be defined as: \(\mathcal{A}_p = \{x_{adv} \in \mathcal{X} | \exists x \in \mathcal{T}, \|x-x_{adv}\|_p < \epsilon \wedge f(x_{adv}) \neq o(x_{adv}) = f(x) = o(x) \}\).
\end{definition}

\citet{Song2018Constructing} define a new type of adversarial examples, called unrestricted adversarial examples, which add adversarial perturbation to the input noise of a mapping like GAN so that the output of the perturbed noise is an adversary for the target classifier.

\begin{definition}
\textbf{\textit{Unrestricted Adversarial Examples.}}
\label{def:unres_adv}
Given a mapping \(G\) from \(z \sim p_z\) to \(G(z) \sim p_\theta\), where \(p_\theta\) is an approximation distribution of \(p_x\), and a small constant $\epsilon > 0$,  unrestricted adversarial examples  can be defined as: $\mathcal{A}_u = \{G(z^*) \in \mathcal{X} | \exists z\sim p_z, z^* \sim p_z, \|z - z^*\|_p < \epsilon \wedge f(G(z^*)) \neq o(G(z^*)) = f(G(z)) = o(G(z))\}$.
\end{definition}

In this work, we define a new type of adversarial examples, called non-constrained adversarial examples, in which we train a mapping like GAN to learn the distribution of adversarial examples so as to generate adversaries from any noise.

\begin{definition}
\textbf{\textit{Non-constrained Adversarial Examples.}}
Given a mapping \(G^*\) from \(z \sim p_z\) to \(G^*(z) \sim q_\theta\), where $q_\theta$ is an  approximation distribution of $p_{adv}$, the non-constrained adversarial examples can be defined as: $\mathcal{A}_n = \{G^*(z) \in \mathcal{X} | o(G^*(z) \neq f(G^*(z))\}$.
\end{definition}

In summary, perturbation-based adversarial examples are based on perturbing an image $x \in \mathcal{X}$, and unrestricted adversarial examples are based on perturbing an input noise $z\sim p_z$ for an existing mapping $G$. In contrast, non-constrained adversarial examples are more generalized that we learn a mapping \(G^*\) such that for any input noise sampled from distribution $p_z$, the output can fool the classifier. It is clear that $\mathcal{A}_p \subset \mathcal{A}_u \subset \mathcal{A}_n$ from the above definitions.

\subsection{Generative Adversarial Net}

Generative Adversarial Net (GAN) \citep{Goodfellow2014GANs} consists of two neural networks, \(G\) and \(D\), trained in opposition to each other. The generator \(G\) is optimized to estimate the data distribution and the discriminator \(D\) aims to distinguish fake samples from \(G\) and real samples from the training data. The objective of \(D\) and \(G\) can be formalized as a min-max value function \(V(G, D)\):
\begin{equation*}
    \label{eq:gan}
    \begin{split}
        \min_G \max_D V(G, D) = \mathbb{E}_{x\sim p_{x}}[\log D(x)] + \\
        \mathbb{E}_{z\sim p_z} [\log (1-D(G(z)))].    
    \end{split}
\end{equation*}

Deep Convolutional Generative Adversarial Net (DCGAN) \citep{Radford2016DCGAN} is the convolutional version of GAN, which implements GAN with convolutional networks and stabilizes the training process. Auxiliary Classifier GAN (AC-GAN) \citep{Odena2017ACGAN} is another variant that extends GAN with some conditions by an extra classifier \(C\). 
The objective function of AC-GAN can be formalized as follows:
\begin{equation*}
\begin{split}
\min_G \max_D \min_C &V(G, D, C) = \mathbb{E}_{x\sim p_{x}}[\log D(x)]\\
& + \mathbb{E}_{z\sim p_z}[\log (1-D(G(z, y)))]\\ 
& + \mathbb{E}_{x\sim p_{x}}[\log (1 - C(x, y))]\\ 
& + \mathbb{E}_{z\sim p_z}[\log (1 -C(G(z,y), y))].
\end{split}
\end{equation*}

To make GAN more trainable in practice, \citet{Arjovsky2017WGAN} propose Wasserstein GAN (WGAN) that uses Wassertein distance so that the loss function has more desirable properties. \citet{Gulrajani2017WGANGP} introduce WGAN with gradient penalty (WGAN\_GP) which outperforms WGAN in practice. Its objective function is formulated as:
\begin{equation*}
    \begin{split}
        &\min_G \max_D V(D,G) = \mathbb{E}_{x \sim p_{x}}[D(x)] - \\ 
        &\mathbb{E}_{z \sim p_z}[D(G(z))] - \lambda \mathbb{E}_{\hat{x} \sim p_{\hat{x}}}[(\|\nabla_{\hat{x}}D(\hat{x})\|_2 - 1)^2],        
    \end{split}
\end{equation*}
where $p_{\hat{x}}$ is uniformly sampling along straight lines between
pairs of points sampled from the data distribution $p_x$ and the generator distribution $p_g$.

\subsection{Unrestricted Adversarial Examples}
\citet{Song2018Constructing} propose to construct Unrestricted Adversarial Examples using AC-GAN. Specifically, given an arbitrary noise vector \(z^0\), their method aims to search a noise input \(z^*\) in the \(\epsilon\)-neighborhood of \(z^0\) for AC-GAN so as to produce an adversarial example \(G(z^*, y_{s})\) for the target model \(f\), whereas \(G(z^*, y_{s})\) can be still classified as \(y_{s}\) for the extra classifier \(C\).
The objective function can be written as:
\begin{equation*}
    \begin{split}
        z^* =& \mathop{\arg\min}_{z} \{ \frac{\lambda_1 }{m}\sum_{i=1}^m \max(|z_i-z_i^0|-\epsilon, 0) \\
        &-\lambda_2 \log C(y_s|G(z^*, y_{s})) \\
        &-max_{y \neq y_s} \log f(y|G(z^*, y_{s}))\} \ ,    
    \end{split}
\end{equation*}

where \(C(y|x)\) (or \(f(y|x)\)) denotes the confidence of prediction label \(y\) for the given input \(x\) on classifier \(C\) (or \(f\)). 

Though \(G(z^*, y_{s})\) can be classified correctly by \(C\), it cannot assure that \(G(z^*, y_{s})\) is a realistic image for human eye~\citep{Nguyen2015Unrecognizable}. Therefore, their method needs human evaluation for the generated adversarial examples.

\section{The Proposed AT-GAN}
\label{sec:method}
In this section, we first introduce the estimation on the distribution of adversarial examples, then propose the AT-GAN framework 
to generate non-constrained adversarial examples, and provide further analysis that AT-GAN can learn the distribution of adversarial examples in the end.

\subsection{Estimating the Adversarial Distribution}

\begin{figure}[htb]
    \begin{center}
        \includegraphics[scale=0.69]{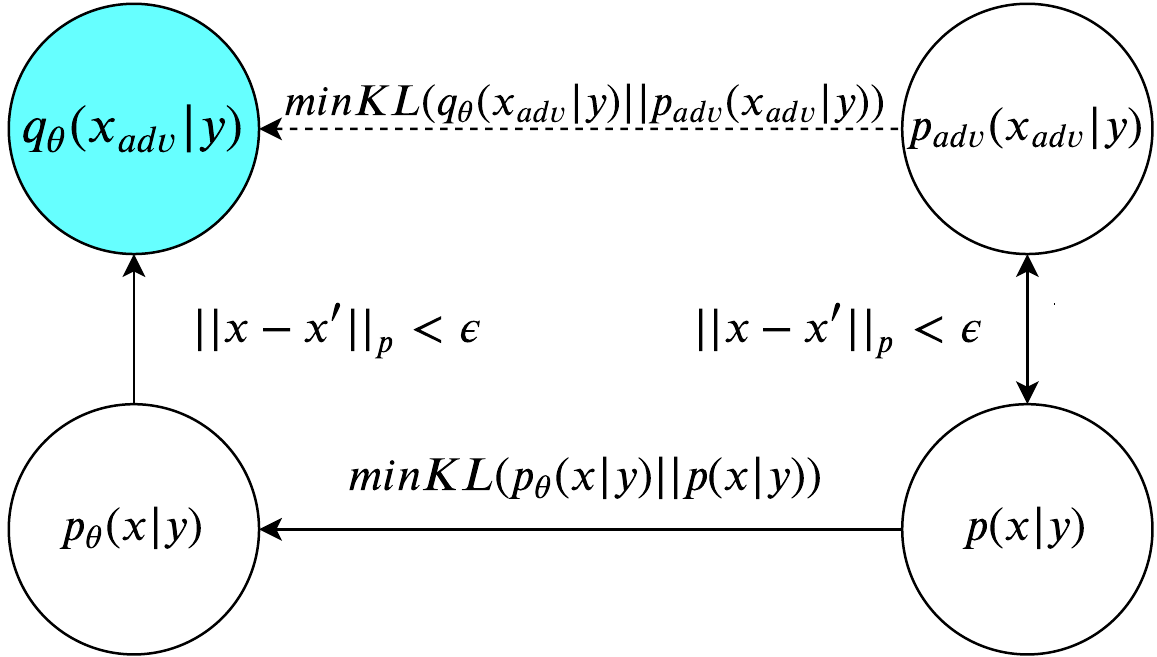}
        \caption{Estimating the distribution of adversarial examples $q_\theta$ in two stages: 1) estimate the distribution of benign data $p_\theta$. 2) transfer $p_\theta$ to estimate $q_\theta$.}
        \label{fig:appro_dis}
    \end{center}
\end{figure}

\label{sec:dis}
In order to generate non-constrained adversarial examples, we need to estimate the distribution of adversarial examples $p_{adv}(x_{adv}|y)$. Given the parameterized estimated distribution $q_{\theta}(x)$, we can define the estimation problem as
\begin{equation}
    \label{eq:dis}
    q_{\theta^*}(x_{adv}|y) = \mathop{\arg\min}_{\theta \in \Omega} KL(q_\theta(x_{adv}|y)\| p_{adv}(x_{adv}|y)) ,
\end{equation}
where $\theta$ indicates trainable parameters and $\Omega$ is the parameter space.

It is hard to calculate Eq. \ref{eq:dis} directly as $p_{adv}(x_{adv}|y)$ is unknown. Inspired by the perturbation-based adversarial examples, as shown in Figure \ref{fig:appro_dis}, we postulate that for each adversarial example $x_{adv}$, there exists some benign examples $x$ where $\|x-x_{adv}\|_p < \epsilon$. 
In other words, $p_{adv}(x_{adv}|y)$ is close to $p(x|y)$ to some extent. By Bayes' theorem, we  have
\begin{equation*}
    p(x|y) = \frac{p(y|x) \cdot p(x)}{p(y)}.
\end{equation*} 

Thus, we can approximately solve Eq. \ref{eq:dis} in two stages: 1) Fit the distribution of benign data $p_\theta$. 2) Transfer $p_\theta$ to estimate the distribution of adversarial examples $q_\theta$.

Specifically, we propose a new adversarial attack model called AT-GAN to learn the distribution of adversarial examples. The overall architecture of AT-GAN is illustrated in Figure \ref{fig:attack_gan}. Corresponding to the above two training stages, we first train the GAN model to get a generator \(G_{original}\) to learn $p_\theta$, then we transfer \(G_{original}\) to attack \(f_{target}\) for the learning of $q_\theta$.

\begin{figure*}[htbp]
\includegraphics[scale=0.75]{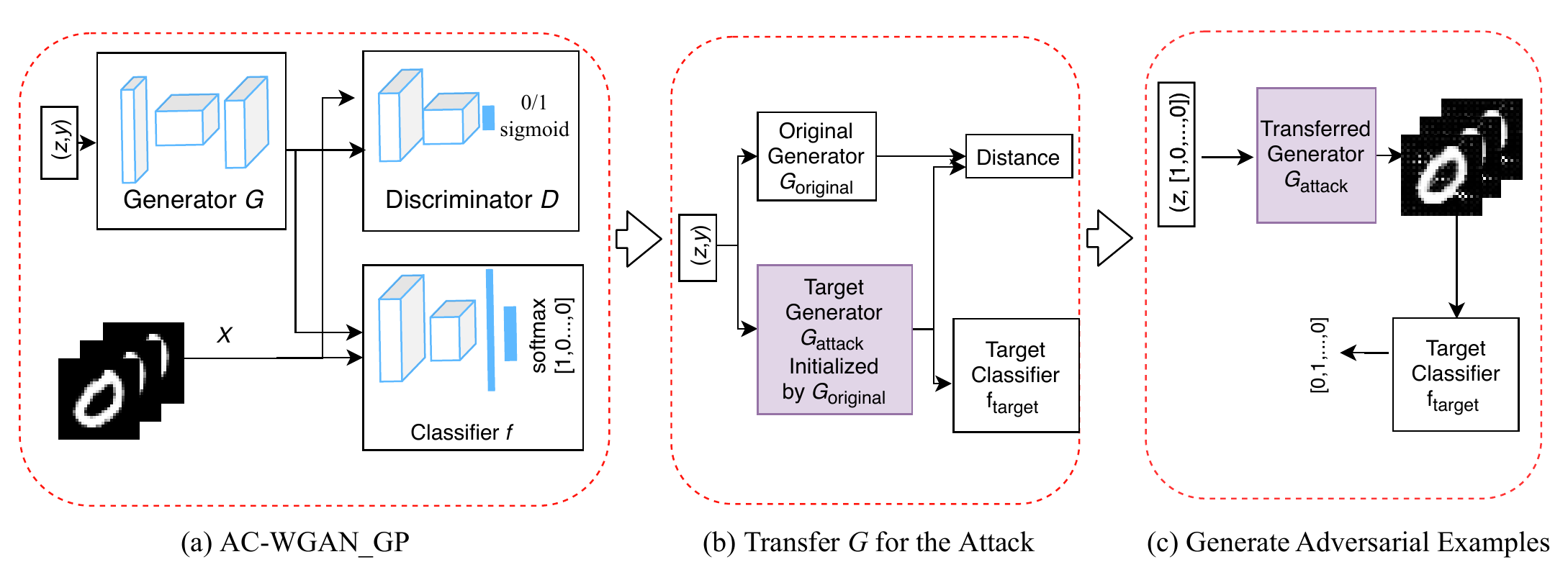}
\caption{The architecture of AT-GAN. The first training stage of AT-GAN is similar to that of AC-WGAN\_GP. After \(G\) is trained, we regard \(G\) as the original model \(G_{original}\) and transfer \(G_{original}\) to train the adversarial generator \(G_{attack}\) to fool the target classifier. After the second stage of training, AT-GAN can generate adversarial examples by  \(G_{attack}\).}
\label{fig:attack_gan}
\end{figure*}

\subsection{Training the Original Generator}
\label{Generator}

Figure \ref{fig:attack_gan} (a) illustrates the overall architecture of AC-WGAN\_GP that we used as the normal GAN. AC-WGAN\_GP is the combination of AC-GAN \citep{Odena2017ACGAN} and WGAN\_GP \citep{Gulrajani2017WGANGP}, composed by three neural networks: a generator \(G\), a discriminator \(D\) and a classifier \(f\). The generator \(G\) takes a random noise \(z\) and a lable \(y\) as the inputs and generates an image \(G(z, y)\). It aims to generate an image \(G(z, y)\) that is indistinguishable to discriminator \(D\) and makes the classifier \(f\) to output label \(y\). The loss function of \(G\) can be formulated as:
\begin{equation*}
\begin{split}
L_G(z,y) = &\mathbb{E}_{z\sim p_z(z)} [H(f(G(z,y)), y)]\\
 &- \mathbb{E}_{z \sim p_z(z)}[D(G(z))].
\end{split}
\end{equation*}
Here \(H(a, b)\) is the entropy between \(a\) and \(b\).
The discriminator \(D\) takes the training data \(x\) or the generated data \(G(z,y)\) as the input and tries to distinguish them. The loss function of \(D\) with gradient penalty for samples 
\(\hat{x} \sim p_ {\hat{x}}\) can be formulated as:
\begin{equation*}
\begin{split}
    L_D(x,z,y) = &-\mathbb{E}_{x\sim p_{data}(x)} [D(x)] \\
    &+ \mathbb{E}_{z\sim p_z(z)} [D(G(z,y))] \\
    &+ \lambda \mathbb{E}_{\hat{x} \sim p_{\hat{x}}(\hat{x})}[(\|\nabla_{\hat{x}}D(\hat{x})\|_2 - 1)^2].
\end{split}
\end{equation*}
The classifier \(f\) takes the training data \(x\) or the generated data \(G(z,y)\) as the input and predicts the corresponding label. There is no difference from other classifiers and the loss function is:
\begin{equation*}
\begin{split}
    L_f(x, y) = &\mathbb{E}_{x\sim p_{data}(x)} [H(f(x), y)] + \\
    &\mathbb{E}_{z\sim p_z(z)} [H(f(G(z,y)), y)].
\end{split}
\end{equation*}
Note that the goal of this stage is to train a generator \(G\) which could output realistic samples and estimate the distribution of real data properly so that we could later on transfer the generator \(G\) to estimate the distribution of adversarial examples. So one could train a generator \(G\) using other methods as long as the generator \(G\) could learn a good distribution of the real data.

\subsection{Transferring the Generator for Attack}
After the original generator \(G_{original}\) is trained, we transfer the generator \(G_{original}\) to learn the distribution of adversarial examples in order to attack the target model.
As illustrated in Figure \ref{fig:attack_gan}b, there are three neural networks, including the original generator \(G_{original}\), the attack generator \(G_{attack}\) to be transferred that has the same weights as \(G_{original}\) in the beginning, and the classifier \(f_{target}\) to be attacked. The goal of the second stage can be described as:
\begin{gather}
    G_{attack}^* = \mathop{\arg\min}_{G_{attack}} \ ||G_{original}(z,y) - G_{attack}(z,y)||_p \nonumber\\
        s.\ t. \quad f_{target}(G(z, y))=y_t \neq \ y ,
    \label{equivalent_problem}
\end{gather}
where \(y_t\) denotes the target adversarial label, \(\|\cdot\|_p\) denotes the \(\ell_p\) norm and we focus on \(p=2\) in this work.  

To optimize Eq. \ref{equivalent_problem}, we construct the loss function by \(L_a\) and \(L_d\), where \(L_a\) aims to assure that \(f_{target}\) yields the target label \(y_t\):
\begin{equation}
L_a(z,y) = \mathbb{E}_{z\sim p_z}[H(f_{target}(G_{attack}(z,y)), y_t)], 
\end{equation}
and \(L_d\) aims to assure that the adversarial generator \(G_{attack}\) generates realistic examples:
\begin{equation}
\label{Ld}
L_d(z,y) = \mathbb{E}_{z\sim p_z}[||G_{original}(z,y) + \rho - G_{attack}(z,y)||_p].
\end{equation}
Here \(\rho\) is a random noise constrained by both \(l_0\) and \(l_\infty\) norm.

The objective function for transferring \(G_{original}\) to \(G_{attack}\) can be formulated as:
\begin{equation}
L(z,y) = \alpha L_a(z,y) + \beta L_d(z,y),
\end{equation}
where \(\alpha\) and \(\beta\) are hyperparameters to control the training process. Note that in the case that \(\alpha = 1\) and \(\beta \to \infty\), the objective function is similar to that of the perturbation-based attacks \citep{Goodfellow2015Explaining, Florian2018Ensemble,Madry2018Towards}.

For the untargeted attack, we can replace \(y_t\) in \(L_a\) with \(\max_{y \neq y_s} C(G_{attack}(z,y_s))\), where \(C(\cdot)\) is the logits of the target classifier \(f_{target}\).

\subsection{Theoretical Analysis of AT-GAN}
\label{sec:ana}
In this subsection, we provide theoretical analysis why AT-GAN can generate as realistic and diverse non-constrained adversarial examples as real data. We will prove that under ideal condition, AT-GAN can estimate the distribution of adversarial examples which is close to that of real data as described in Section \ref{sec:dis}. 

Suppose $p_{data}$ is the distribution of real data, $p_g$ and $p_a$ are the distribution learned by the generator of AC-WGAN\_GP and AT-GAN respectively. For the optimization of Eq. \ref{equivalent_problem}, \(L_d\) aims to constrain the image generated by \(G_{attack}\) in the \(\epsilon\)-neighborhood of \(G_{original}\). We prove that under the ideal condition that \(L_d\) guarantees \(G_{attack}(z,y)\) to be close enough to \(G_{original}(z,y)\) for any input noise \(z\), the distribution of AT-GAN almost coincides the distribution of AC-WGAN\_GP. Formally, we state our result for the two distributions as follows.

\begin{thm}
Suppose $\max_{z,y} L_d(z,y) < \epsilon$, we have $KL(p_a \| p_g) \to 0$ when $\epsilon \to 0$.
\label{thm:atgan}
\end{thm}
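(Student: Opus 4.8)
The plan is to regard $p_g$ and $p_a$ as the pushforward measures of the latent distribution $p_z$ under $G_{original}(\cdot,y)$ and $G_{attack}(\cdot,y)$, and to show that as the hypothesis drives the two generators together, the induced densities converge so that the $KL$ functional vanishes. First I would unpack the assumption: since $\max_{z,y} L_d(z,y) < \epsilon$ and $L_d$ measures the $\ell_p$ gap between $G_{original}(z,y)+\rho$ and $G_{attack}(z,y)$, I obtain the uniform bound $\|G_{attack}(z,y)-G_{original}(z,y)\|_p < \epsilon + \|\rho\|_p$ for every $(z,y)$. Taking the auxiliary noise $\rho\to 0$ together with $\epsilon$, this says precisely that the attack generator converges \emph{uniformly} to the original generator over the whole latent space.

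Next I would write both densities through the change-of-variables formula. Working under the idealized condition that $G_{original}(\cdot,y)$ is a diffeomorphism onto its image with non-vanishing Jacobian, for a fixed label and image $x=G_{attack}(z,y)$ one has
\[
p_a(x) = \frac{p_z(z)}{|\det D_z G_{attack}(z,y)|}, \qquad p_g(x) = \frac{p_z\big(G_{original}^{-1}(x)\big)}{\big|\det D_z G_{original}\big(G_{original}^{-1}(x),y\big)\big|}.
\]
Substituting $x=G_{attack}(z,y)$ into the definition of $KL$ and pushing the integral back onto the latent variable gives
\[
KL(p_a\|p_g) = \mathbb{E}_{z\sim p_z}\!\left[\log \frac{p_a(G_{attack}(z,y))}{p_g(G_{attack}(z,y))}\right].
\]
I would then argue that each integrand tends to $0$: as $\epsilon\to 0$ the uniform convergence $G_{attack}\to G_{original}$ forces $G_{original}^{-1}(G_{attack}(z,y))\to z$ and the Jacobian ratio to tend to $1$, so the log-ratio converges pointwise to $\log 1 = 0$, and a dominated-convergence step carries this through the expectation to yield $KL(p_a\|p_g)\to 0$.

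The hard part will be the regularity bookkeeping needed to make this rigorous. Uniform closeness of $G_{attack}$ and $G_{original}$ controls only the maps, not their derivatives, so one must \emph{assume} that $\det D_z G_{attack}$ converges to $\det D_z G_{original}$, and that $G_{original}$ is invertible with bounded inverse on the relevant support; these are exactly the ``ideal condition'' hypotheses flagged in the text, and I would state them explicitly. A second, more serious subtlety is that when the latent dimension is smaller than the image dimension the two pushforwards live on \emph{different} manifolds, so $p_a \not\ll p_g$ and $KL(p_a\|p_g)$ is formally infinite for every $\epsilon>0$; I would therefore interpret the densities in the full-dimensional (or Lebesgue-smoothed, common-support) regime in which absolute continuity holds, and exhibit an integrable dominating function to license the passage to the limit. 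The argument is anchored by the degenerate case $\epsilon=0$, where the constraint collapses $G_{attack}=G_{original}$, hence $p_a=p_g$ and $KL=0$; the theorem is then the continuity statement that this base case is approached smoothly as $\epsilon\to 0$.
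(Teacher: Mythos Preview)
Your approach differs substantially from the paper's. The paper does not pass through pushforwards or the change-of-variables formula at all. It proceeds in two steps: first, a lemma asserting that if two densities $p$ and $q$ satisfy $\|p(x)-q(x)\|<\epsilon$ pointwise for every $x$ then $KL(P\|Q)\to 0$ as $\epsilon\to 0$ (argued by writing $q=p+r$ with $|r|<\epsilon$, rearranging the KL integral to $\int r(x)\log\frac{q(x)}{p(x)}\,dx-KL(Q\|P)$, and bounding this by $\int \epsilon\log\!\bigl(1+\tfrac{\epsilon}{p(x)}\bigr)\,dx$); second, the bare assertion that $\max_{z,y}L_d<\epsilon$ ``indicates $\|p_a(z)-p_g(z)\|<\epsilon$ for all $z$,'' after which the lemma is invoked directly.

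The comparison is instructive. The paper's route is shorter and stays at the level of elementary calculus, but the jump from ``the two generators are uniformly $\epsilon$-close as maps'' to ``their induced densities are uniformly $\epsilon$-close'' is stated without justification, and in general it is false without precisely the invertibility and Jacobian regularity you introduce. Your route makes those hypotheses explicit and shows where they enter (the Jacobian ratio and the convergence $G_{original}^{-1}(G_{attack}(z))\to z$); you also surface the manifold/absolute-continuity obstruction that the paper's argument silently bypasses. Conversely, the paper avoids pushforward machinery and dominated convergence entirely: if one simply grants the density-closeness step as part of the ``ideal condition,'' its lemma gives a one-line finish. Either way, you have correctly located the real difficulty; what the paper treats as obvious is exactly what your regularity bookkeeping is trying to supply.
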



The proof of Theorem \ref{thm:atgan} is in Appendix \ref{app:exp}. \citet{Samangouei2018DefenseGAN} prove that the global optimum of WGAN is \(p_g=p_{data}\) and we can show that the optimum of AC-WGAN\_GP has the same property. We can formalize the property as follows:

\begin{thm}\label{thm:OPTIMUMWGAN}
The global minimum of the virtual training of AC-WGAN\_GP is achieved if and only if \(p_g = p_{data}\).
\end{thm}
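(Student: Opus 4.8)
The plan is to follow the template of \citet{Samangouei2018DefenseGAN} for plain WGAN and adapt it to the extra auxiliary-classifier term. First I would make the notion of \emph{virtual training} precise: for a fixed generator $G$, let the critic $D$ and the classifier $f$ each attain their optimum, so that the $\min_G\max_D\min_f$ problem collapses to the minimization of a single reduced value function $V^*(G)$ over $G$. The theorem then reduces to the claim that $V^*$ attains its global minimum exactly at generators with $p_g=p_{data}$.

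For the critic part I would invoke the Kantorovich--Rubinstein duality. In the idealized regime the gradient penalty restricts $D$ to the $1$-Lipschitz functions, so
\[
\max_{\|D\|_{L}\le 1}\Big(\mathbb{E}_{x\sim p_{data}}[D(x)]-\mathbb{E}_{x\sim p_{g}}[D(x)]\Big)=W_{1}(p_{data},p_{g}),
\]
the Wasserstein-$1$ distance. Since $W_{1}$ is a metric on probability measures, this term is non-negative and equals zero \emph{if and only if} $p_{g}=p_{data}$; in particular it is \emph{strictly} positive whenever $p_{g}\neq p_{data}$. This is the engine of the argument and already pins down $p_{g}=p_{data}$ as the unique minimizer of the Wasserstein component.

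The remaining work, which I expect to be the main obstacle, is to control the auxiliary-classifier contribution and show it does not move this optimum. When $f$ is optimal and $p_{g}=p_{data}$ as a joint law over $(x,y)$, the classifier reduces to the Bayes posterior common to real and generated samples, so the generator's conditional loss $\mathbb{E}_{z}[H(f(G(z,y)),y)]$ equals the Bayes-optimal cross-entropy and the real/fake classifier terms coincide; thus $p_{g}=p_{data}$ certifies a minimum of $V^*$ and gives the ($\Leftarrow$) direction. For ($\Rightarrow$) the delicate point is that a degenerate generator concentrated on easily classified prototypes could in principle push the classification loss below its value at the matched distribution, so I cannot treat the two components independently. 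I would resolve this by exploiting the \emph{strict} positivity of the Wasserstein term: any $p_{g}\neq p_{data}$ incurs a strictly positive critic penalty, and I would argue that under the chosen weighting this penalty is not compensated by the bounded decrease available in the classifier term, so no competing minimizer exists. Making this last inequality rigorous---ideally by analyzing the conditional objective $p_{g}(\cdot\mid y)$ versus $p_{data}(\cdot\mid y)$ class by class rather than the joint law---is where the real effort lies, and it is what separates the AC-WGAN\_GP statement from the plain WGAN result of \citet{Samangouei2018DefenseGAN}.
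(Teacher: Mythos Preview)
Your route is genuinely different from the paper's, and the difference is precisely the point you flag at the very end. The paper does \emph{not} analyze the joint objective with the classifier term hanging around; it first fixes a class label $y$ and argues that, conditional on $y$, the AC-WGAN\_GP objective reduces to a plain WGAN\_GP objective between $p_g(\cdot\mid y)$ and $p_{data}(\cdot\mid y)$. The auxiliary-classifier contribution is thereby absorbed into the conditioning rather than treated as a competing term, so the trade-off you worry about---a generator that sacrifices Wasserstein distance to drive the classification loss down---never has to be confronted. Once the problem is per-class WGAN\_GP, the paper concludes $p_g(\cdot\mid y)=p_{data}(\cdot\mid y)$ for every $y$, hence $p_g=p_{data}$.

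A second methodological difference: within each class the paper does \emph{not} invoke Kantorovich--Rubinstein duality. Following \citet{Samangouei2018DefenseGAN}, it writes $V(D,G)=\int (p_{data}-p_g)D\,dx$ minus the penalty term, takes the inner-max discriminator to be the indicator $D_G^*(x)=\mathbf{1}\{p_{data}(x)\ge p_g(x)\}$, and then uses a measure-theoretic contradiction (both densities integrate to $1$) to force equality almost everywhere. Your KR-duality step is cleaner and more faithful to the Lipschitz constraint that the gradient penalty is meant to enforce; the paper's indicator discriminator is not $1$-Lipschitz, so its argument is really treating $D$ as unconstrained. What your approach costs you is exactly the coupling headache you identify; the paper buys simplicity by conditioning on $y$ first, at the price of a looser treatment of the critic. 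If you want to follow the paper, simply execute the class-by-class reduction you already mention and drop the attempt to balance the two terms globally.
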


The proof of Theorem \ref{thm:OPTIMUMWGAN} is in Appendix \ref{app:exp}. According to Theorem \ref{thm:atgan} and \ref{thm:OPTIMUMWGAN}, under the ideal condition, we conclude \(p_a \approx p_g = p_{data}\), which indicates that the distribution of adversarial examples learned by AT-GAN is very close to that of real data as discussed in Section \ref{sec:dis}, so that the non-constrained adversarial instances are as realistic and diverse as the real data.

\section{Experiments}
\label{sec:exp}
To validate the effectiveness of AT-GAN, we empirically evaluate our non-constrained adversarial examples on various datasets, and demonstrate that AT-GAN yields higher attack success rates against adversarial training and achieves moderate transferability with higher efficiency. Besides, AT-GAN can learn a distribution of adversarial examples which is close to the real data distribution, and generate realistic and diverse adversarial examples.
\subsection{Experimental Setup}
\textbf{Datasets.} We consider three standard datasets, namely MNIST \citep{LeCun1998MNIST}, Fashion-MNIST \citep{Han2017Fashion-MNIST} and CelebA \citep{Liu2015CelebA}. Specifically, MNIST is a dataset of hand written digit number from \(0\) to \(9\). Fashion-MNIST is similar to MNIST with \(10\) categories of fashion clothes.
CelebA is a dataset with more than \(200,000\) celebrity images, and we group them according to female/male and focus on gender classification as in \citet{Song2018Constructing}.
For all datasets, we normalize the pixel values into range \([0,1]\).

\textbf{Baselines.} We compare the proposed AT-GAN with typical perturbation-based adversarial attacks, i.e., FGSM \citep{Goodfellow2015Explaining}, PGD \citep{Madry2018Towards}, R+FGSM \citep{Florian2018Ensemble} and unrestricted adversarial attack \citep{Song2018Constructing}. 

\textbf{Models.} For MNIST and Fashion-MNIST, we adopt four models used in \citet{Florian2018Ensemble}, denoted as \textsf{Model A} to \textsf{D}.
For CelebA, we consider three models, namely CNN, VGG16 and ResNet. More details about the architectures are provided in Appendix \ref{app:exp}.

\textbf{Evaluation Setup.} 
To evaluate the performance of these attacks, we consider normal training and existing strong defenses, namely adversarial training \citep{Goodfellow2015Explaining}, ensemble adversarial training \citep{Florian2018Ensemble} and iterative adversarial training~\citep{Madry2018Towards}.

All experiments are conducted on a single Titan X GPU. The hyper-parameters used in the experiments are described in Appendix \ref{app:exp}.

\subsection{Evaluation Results}
\subsubsection{Comparison on Attack Success Rate}
\begin{table*}[htb]
\caption{Attack success rate (\%) of adversarial examples generated by AT-GAN and the baselines attacks under white-box attack setting against models by normal training and various adversarial training methods. For each model, the highest attack success rate is highlighted in \textbf{bold}. Nor.: Normal training, Adv.: Adversarial training, Ens.: Ensemble adversarial training, Iter. Adv.: Iterative adversarial training.}
\vspace{0.2em}
\label{tab:adv_res}
\centering

\subtable[Comparison of AT-GAN and other attacks on MNIST.]{
\scalebox{0.84}{
  \begin{tabular}{ccccccccccccccccc}
    \toprule
    \multirow{2}{*}{Attack} 	& \multicolumn{4}{c}{Model A}	& \multicolumn{4}{c}{Model B}	& \multicolumn{4}{c}{Model C} & \multicolumn{4}{c}{Model D}\\
    \cmidrule(r){2-5}     \cmidrule(r){6-9}     \cmidrule(r){10-13} \cmidrule(r){14-17}
   	&  Nor. & Adv. & Ens. & Iter. & Nor. & Adv. & Ens. & Iter. & Nor. & Adv. & Ens. & Iter.  & Nor. & Adv. & Ens. & Iter.\\
    \midrule
    Clean Error (\%) & ~~0.9 & ~~0.9 & ~~ 0.8 & ~~ 0.9 & ~~1.0 & ~~0.9 & ~~1.0 & ~~0.9 & ~~1.0  & ~~1.0  & ~~1.0  & ~~1.0 & ~~2.7 & ~~2.6 & ~~2.6 & ~~1.5 \\
    \midrule
	FGSM 	& 68.8 & ~~2.9 & ~~8.7 & ~~4.0 & 88.0 & ~~9.5 & 18.0 & ~~9.2 & 70.7 & ~~4.3 & ~~7.8 & ~~4.7 & 89.6 & 23.5 & 34.6 & 26.7\\
	PGD 	& \textbf{100.0} & 92.6 & 85.1 & ~~5.9 & \textbf{100.0} & 42.4 & 98.2 & 36.2 & \textbf{100.0} & 76.9 & 96.4 & ~~9.6 & 91.7 & 96.8 & \textbf{99.5} & 81.1\\
	R+FGSM 	& 77.9 & 28.4 & 20.3 & ~~2.6 & 96.5 & ~~7.1 & 42.1 & ~~4.6 & 81.6 & ~~7.2 & 19.9 & ~~2.9 & 93.8 & 76.2 & 51.6 & 25.3\\
	Song's 	& 82.0 & 70.5 & 75.0 & 84.6 & 76.9 & 65.0 & 72.0 & 80.7 & 74.2 & 75.6 & 72.6 & 87.8 & 67.7 & 43.6 & 56.3 & 44.5\\
	AT-GAN 	& 98.7 & \textbf{97.5} & \textbf{96.7}  & \textbf{91.4} & 99.5 & \textbf{97.7} & \textbf{99.3} & \textbf{95.6} & 99.3 & \textbf{95.8} & \textbf{96.9} & \textbf{90.0} & \textbf{99.9} & \textbf{99.9} & \textbf{99.5} & \textbf{99.7}\\
    \bottomrule
    
  \end{tabular}
}}
\vspace{0.2em} 
\subtable[Comparison of AT-GAN and other attacks on Fashion-MNIST.]{
 
\scalebox{0.86}{
  \begin{tabular}{ccccccccccccccccc}
    \toprule
    \multirow{2}{*}{Attack} 	& \multicolumn{4}{c}{Model A}	& \multicolumn{4}{c}{Model B}	& \multicolumn{4}{c}{Model C} & \multicolumn{4}{c}{Model D}\\
    \cmidrule(r){2-5}     \cmidrule(r){6-9}     \cmidrule(r){10-13} \cmidrule(r){14-17}
   	&  Nor. & Adv. & Ens. & Iter. & Nor. & Adv. & Ens. & Iter. & Nor. & Adv. & Ens. & Iter.  & Nor. & Adv. & Ens. & Iter.\\
    \midrule
    Clean Error (\%) & ~~8.2 & ~~8.3 & ~~8.0 & 10.5 & ~~9.8 & ~~9.3 & 10.4 & ~~9.3 & ~~7.8 & ~~8.1 & ~~7.9 & 10.0 & 14.8 & 13.7 & 14.5 & 10.6 \\ 
    \midrule
	FGSM 	& 82.7 & 14.6 & 36.0 & 23.2 & 82.3 & 24.2 & 30.0 & 23.8 & 82.7 & 11.0 & 47.8 & 22.3 & 77.2 & 33.8 & 47.5 & 22.9\\
	PGD 	& \textbf{99.9} & 82.6 & 90.8 & 30.2 & 96.0 & 69.9 & \textbf{95.3} & 34.7 & \textbf{100.0} & 88.3 & 99.9 & 28.8 & 85.4 & 54.7 & 76.8 & 30.2\\
	R+FGSM 	& 95.5 & 63.3 & 68.4 & 37.8 & 90.9 & 74.2 & 81.3 & 28.2 & 98.4 & 76.5 & 71.3 & 31.2 & 88.3 & 45.3 & 62.1 & 33.6\\
	Song's 	& 93.1 & \textbf{96.7} & 95.3 & 82.1 & 80.3 & 88.5 & 92.2 & 80.0 & 96.5 & \textbf{95.6} & \textbf{96.6} & 83.2 & 66.8 & 49.2 & 63.1 & 84.6 \\
	AT-GAN 	& 96.1 & 92.7 & \textbf{95.4} & \textbf{93.5} & \textbf{98.5} & \textbf{91.1} & 93.6 & \textbf{91.6} & 98.0 & 88.9 & 93.9 & \textbf{91.6} & \textbf{99.9} & \textbf{99.4} & \textbf{99.1} & \textbf{93.1}\\
    \bottomrule
    
  \end{tabular}
}}
\vspace{0.2em} 
\subtable[Comparison of AT-GAN and other attacks on CelebA.]{
  \begin{tabular}{ccccccccccccccc}
    \toprule
    \multirow{2}{*}{Attack} 	& \multicolumn{4}{c}{CNN}	& \multicolumn{4}{c}{VGG}	& \multicolumn{4}{c}{ResNet} \\
    \cmidrule(r){2-5}     \cmidrule(r){6-9}     \cmidrule(r){10-13} 
   	&  Nor. & Adv. & Ens. & Iter. & Nor. & Adv. & Ens. & Iter. & Nor. & Adv. & Ens. & Iter. \\
    \midrule
    Clean Error (\%) & ~~2.7 & ~~2.7 & ~~3.2 & ~~2.4 & ~~2.1 & ~~3.0 & ~~2.9 & ~~2.4 & ~~2.3 & ~~3.0 & ~~3.0 & ~~2.5\\
    \midrule
	FGSM 	& 81.2 & 11.8 & 14.7 & ~~9.5 & 76.7 & 10.6 & 16.2 & ~~8.9 & 98.7 & ~~9.8 & 12.3 & ~~9.5\\
	PGD 	& 97.3 & 16.4 & 22.6 & 11.4 & 87.9 & 14.6 & 26.3 & 10.7 & \textbf{100.0} & 10.9 & 15.1 & 10.5\\
	R+FGSM 	& 68.7 & ~~9.5 & 11.3 & ~~7.9 & 68.4 & ~~8.7 & 13.2 & ~~7.3 & 97.5 & ~~7.9 & ~~9.5 & ~~7.8\\
	Song's 	& 90.6 & 83.4 & 85.7 & 89.8 & \textbf{98.7} & 87.5 & 95.7 & 81.6 & 99.2 & 93.4 & 91.0 & 90.6\\
    	AT-GAN 	& \textbf{97.5} & \textbf{98.9} & \textbf{95.9} & \textbf{99.6} & 97.1 & \textbf{96.7} & \textbf{95.8} & \textbf{97.8} & 98.4 & \textbf{98.5} & \textbf{97.3} & \textbf{98.5}\\
    \bottomrule
  \end{tabular}}
\end{table*}
To validate the efficacy, we compare AT-GAN with other baseline attack methods under white-box setting. Since \citet{Athalye2018Obfuscated} have shown that the currently most effective defense method is adversarial training, we consider adversarially trained models as the defense models. 
The attack success rates are reported in Table \ref{tab:adv_res}.

On MNIST, AT-GAN is the best attack method against all the defense models. As for \textsf{Model A}, \textsf{B} and \textsf{C} by normal training, AT-GAN gains the second highest attack success rates over \(98\%\). On Fashion-MNIST, AT-GAN achieve the highest attack success rate on average. On CelebA, AT-GAN achieves the best attack performance almost on all models. 
The only exceptions are that \citet{Song2018Constructing} achieves the highest attack rate on VGG, and PGD achieves the highest attack rate on ResNet by normal training respectively. Under the two cases, the results of AT-GAN  are  still close to the best results.

On average, AT-GAN achieves the highest attack success rate on all defense models. As AT-GAN aims to estimate the distribution of adversarial examples, adversarial training with some specific attacks has little robustness against AT-GAN, raising a new security issue for the development of more generalized adversarially training models. 

\subsubsection{Comparison on Attack Efficiency}
There are many scenarios where one needs large amount of adversarial examples, such as adversarial training or exploring the properties of adversarial examples.
The efficiency for adversarial attacks to generate adversarial examples is significant, but is ignored in most previous works. 

As an adversarial generation model, AT-GAN can generate adversarial examples very quickly. 
Here we evaluate the efficiency of each attack method to attack \textsf{Model A} on MNIST data as an example. The average time of generating \(1000\) adversarial examples is summarized in Table \ref{tab:time}. Among the five attack methods, AT-GAN is the fastest as it could generate adversarial examples without the target classifier and gradient calculation once it is trained. Note that \citet{Song2018Constructing} needs much longer time than others because it needs multiple searches and queries to generate one adversarial example.

\begin{table}[htb]
  \caption{Comparison on the example generation time, measured by generating 1000 adversarial instances using \textsf{Model A} on MNIST. }
   \vspace{0.5em} 
  \label{tab:time}
  \centering
  \begin{tabular}{cccccc}
    \toprule
    & FGSM & PGD & R+FGSM & Song's & AT-GAN\\
    \midrule
	Time &0.3s &1.8s &0.4s	& $\geq$ 2.5min &0.2s\\
    \bottomrule   
  \end{tabular}
\end{table}

\begin{figure*}[htb]
    \centering
    \subfigure[Adversarial examples for \textsf{Model A} on MNIST]{\includegraphics[scale=0.6]{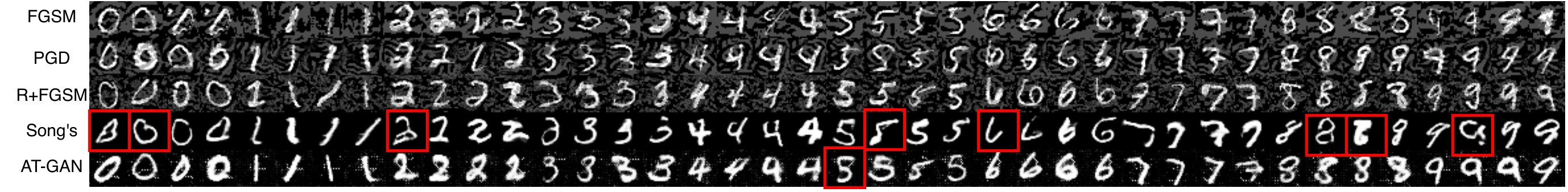}}
    \subfigure[Adversarial examples for \textsf{Model A} on Fashion-MNIST)]{\includegraphics[scale=0.6]{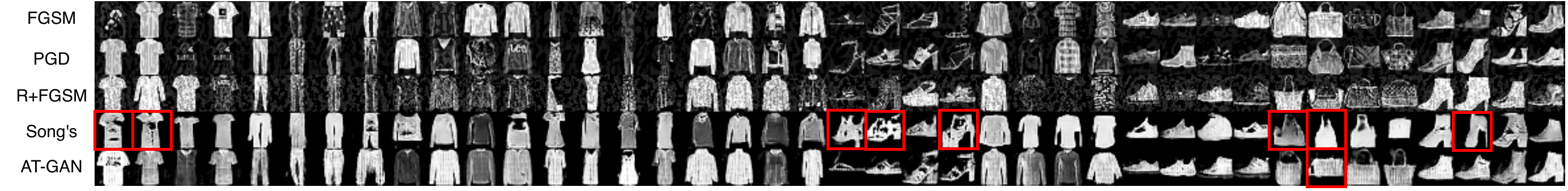}}
    \subfigure[Adversarial examples for \textsf{CNN} on CelebA]{\includegraphics[scale=0.48]{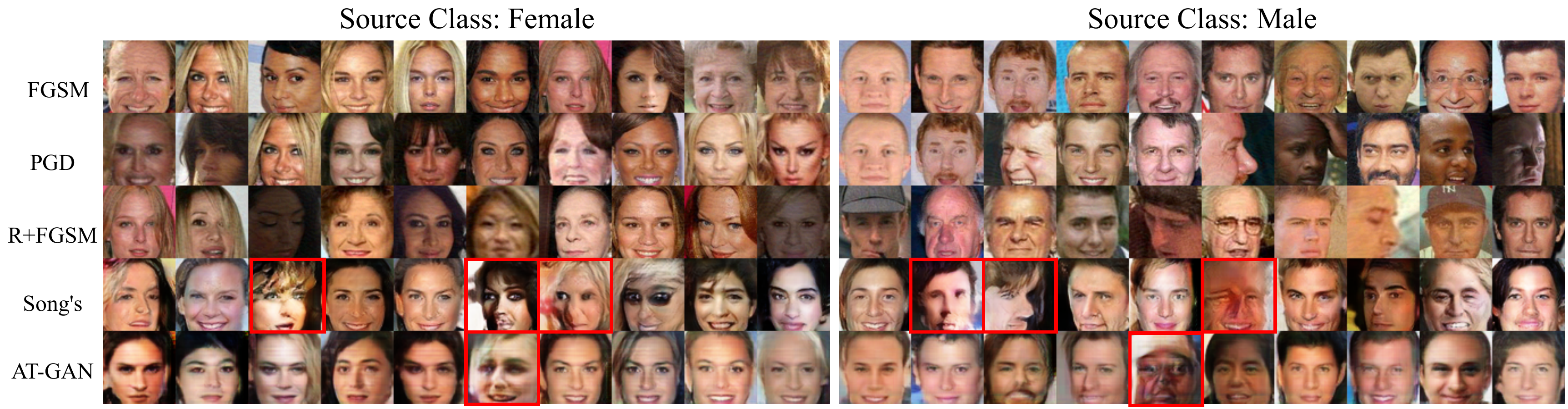}}
    \caption{Adversarial examples generated by various methods on three datasets (Zoom in for details). The red borders indicate unrealistic adversarial examples generated by Song's method or AT-GAN.}
    \label{fig:adv_img}
\end{figure*}
\vspace{0.5em}
\subsubsection{Visualization of Adversarial Examples}
\label{sec:adv_fig}
The goal of adversarial examples is to fool neural networks but not to fool humans.
Thus, we illustrate some adversarial examples generated by different attacks for \textsf{Molde A} on MNIST and Fashion-MNIST, and CNN on CelebA.

As shown in Figure \ref{fig:adv_img}, on MNIST,
AT-GAN generates slightly more realistic images than \citet{Song2018Constructing}, for example on “0” and “3”.  On both Fashion-MNIST and CelebA, some adversarial examples generated by Song’s method are not as realistic as AT-GAN to human perception, for example on “t-shirt/top (0) ”, “sandal (5)” and the details of some faces. 
As for perturbation-based attack, their adversarial examples are not clear enough especially on MNIST and Fashion-MNIST datasets due to the adversarial perturbations. More non-constrained adversarial examples generated by AT-GAN for target attack are shown in Appendix \ref{app:exp}.

In general, AT-GAN can generate realistic and diverse adversarial examples as Eq. \ref{eq:dis} forces the generated non-constrained adversarial examples to be close enough to the benign examples generated by the original generator.

\subsection{Transferability of AT-GAN}
One important feature for adversarial examples is the transferability across different models.
To demonstrate the transferability of non-constrained adversarial examples, we use adversarial examples generated by attacking \textsf{Model A} (MNIST and Fashion-MNIST) and \textsf{CNN} (CelebA), to evaluate the attack success rates on \textsf{Model C} (MNIST and Fashion-MNIST) and \textsf{VGG} (CelebA). 

As shown in Table \ref{tab:transfer}, non-constrained adversarial examples generated by AT-GAN exhibit moderate transferability although it is not always the best.
Note that as shown in Section \ref{sec:adv_fig}, adversarial examples generated by other attack methods are not as realistic as non-constrained adversarial examples and less realistic images could result in a higher transferability.
\begin{table*}[htb]
  \caption{Transferability of non-constrained adversarial examples and other traditional adversarial examples on the three datasets. For MNIST and Fashion-MNIST datasets, we attack \textsf{Model C} with adversarial examples generated on \textsf{Model A}. For CelebA dataset, we attack VGG using adversarial examples generated on CNN. Numbers represent the attack success rate (\%).}
  \label{tab:transfer}
  \vspace{1em}
  \centering
\scalebox{1}{
  \begin{tabular}{ccccccccccccc}
    \toprule
    & \multicolumn{4}{c}{MNIST} & \multicolumn{4}{c}{Fashion-MNIST} & \multicolumn{4}{c}{CelebA}\\
    \cmidrule(r){2-5} \cmidrule(r){6-9} \cmidrule(r){10-13}
    & Nor. & Adv. & Ens. & Iter. Adv. & Nor. & Adv. & Ens. & Iter. Adv. & Nor. & Adv. & Ens. & Iter. Adv.\\
    
    \midrule
    FGSM 	& 46.7 			& ~~4.2			& ~~1.7			& ~~4.6			& 68.9			& 23.1 			& 20.8 			& 14.8 & 15.6 & ~~4.3 & ~~3.3 & ~~4.1\\
	PGD 	& \textbf{97.5}	& ~~6.5			& ~~4.1			& ~~4.1			& \textbf{84.7}	& 27.6 			& \textbf{39.6}	& 14.6 & 18.3 & ~~4.3 & ~~3.1 & ~~4.1\\
	R+FGSM 	& 82.3 			& ~~6.7			& ~~4.8			& ~~4.1			& 21.2 			& 32.1 			& 17.5 			& 26.3 & 11.0 & ~~4.0 & ~~3.3 & ~~3.8\\
	Song's 	&  23.8		& 	20.8	&  20.6		& 		\textbf{20.1}	& 	39.2		& \textbf{34.0}	&  		31.5	& \textbf{30.3} & 9.6 & \textbf{31.8} & \textbf{21.5} & \textbf{38.8} \\
	AT-GAN 	& 65.3			& \textbf{24.6}	& \textbf{27.9}	& 17.2	& 58.0 			& 22.7 			& 32.0 			& 15.2 & \textbf{63.7} & 15.4 & 16.5 & 17.6\\
    \bottomrule   
  \end{tabular}
}
\end{table*}

\begin{figure*}[htbp]

\centering
\subfigure[Test set]{
\begin{minipage}[t]{0.16\linewidth}
\centering
\includegraphics[width=1.1in]{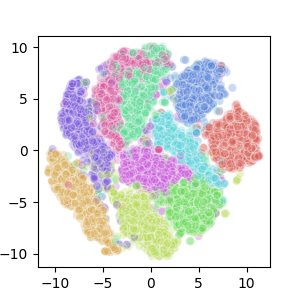}
\includegraphics[width=1.1in]{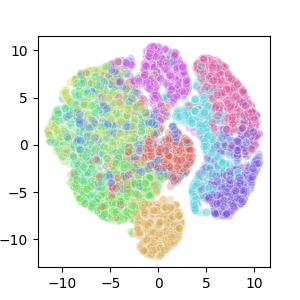}
\end{minipage}%
}%
\subfigure[FGSM]{
\begin{minipage}[t]{0.16\linewidth}
\centering
\includegraphics[width=1.1in]{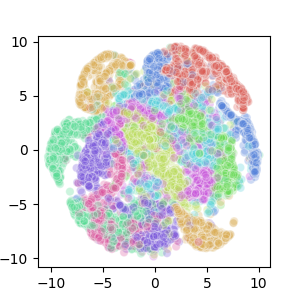}
\includegraphics[width=1.1in]{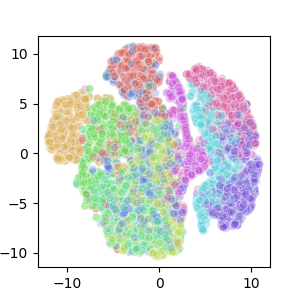}
\end{minipage}%
}%
\subfigure[PGD]{
\begin{minipage}[t]{0.16\linewidth}
\centering
\includegraphics[width=1.1in]{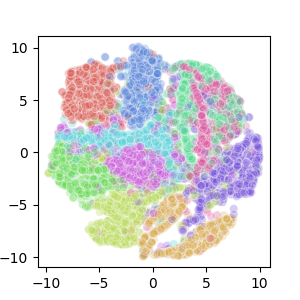}
\includegraphics[width=1.1in]{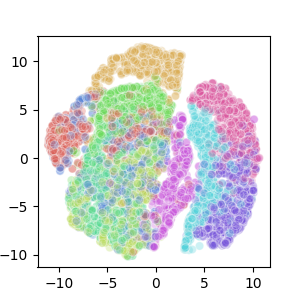}
\end{minipage}
}%
\subfigure[R+FGSM]{
\begin{minipage}[t]{0.16\linewidth}
\centering
\includegraphics[width=1.1in]{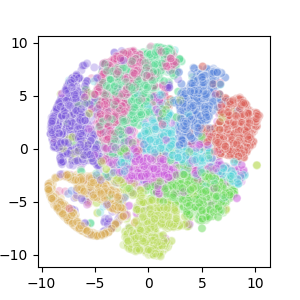}
\includegraphics[width=1.1in]{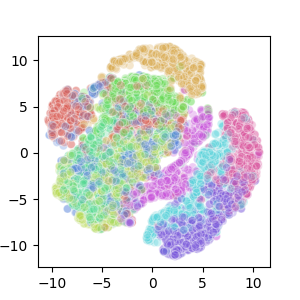}
\end{minipage}
}%
\centering
\subfigure[Song]{
\begin{minipage}[t]{0.16\linewidth}
\centering
\includegraphics[width=1.1in]{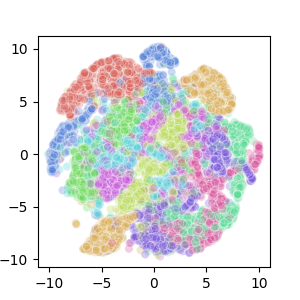}
\includegraphics[width=1.1in]{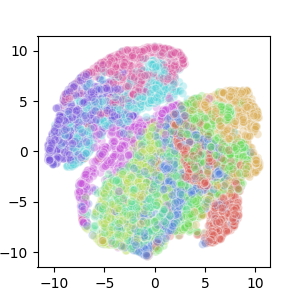}
\end{minipage}
}%
\centering
\subfigure[AT-GAN]{
\begin{minipage}[t]{0.16\linewidth}
\centering
\includegraphics[width=1.1in]{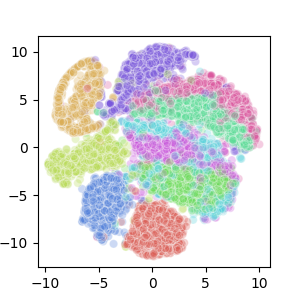}
\includegraphics[width=1.1in]{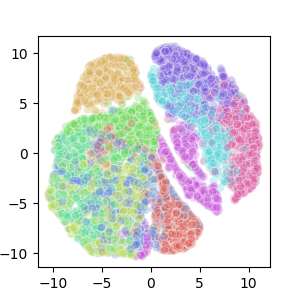}
\end{minipage}
}%
\centering
\caption{T-SNE visualizations for the combination of test set and adversarial examples generated by different adversarial attacks on MNIST (top) and Fashion-MNIST (bottom). For (a), we use 10,000 sampled real images in test set. For (b) to (f), we use 5,000 sampled images in test set and 5,000 adversarial examples generated by different attacks. The position of each class is random due to the property of t-SNE.}
\label{tab:tSNEResult}
\end{figure*}

\subsection{Visualization of Adversarial Distribution}
As discussed in Section \ref{sec:ana}, we give a brief analysis that AT-GAN can learn a distribution of adversarial examples which is close to the distribution of real data. To identify this empirically, we randomly choose $5,000$ benign images and $5,000$ adversarial examples generated by different attack methods, and merge these images according to their real label for MNIST and Fashion-MNIST. Then we use t-SNE \citep{Maaten2008tsne} on these images to illustrate the distributions in 2 dimensions.
T-SNE models each high-dimensional object in such a way that similar objects are modeled by nearby points and dissimilar objects are modeled by distant points with high probability.
It indicates that, if the adversarial examples have different distribution from that of the benign data, t-SNE could not deal well with them and the points with different categories will overlap with each other after dimension reduction, i.e., the results will be in chaos.

The results are illustrated in Figure \ref{tab:tSNEResult}. For AT-GAN, different categories are separated as that of the test set while those of other methods are mixed with each other.
It indicates the distribution that AT-GAN learned is indeed very close to the distribution of real data.

\section{Conclusion}
We propose an adversarial generator model which could generate non-constrained adversarial example for any input noise sampled from a distribution such as Gaussian distribution. Unlike perturbation-based adversarial examples that perturbs the input image or unrestricted adversarial examples that perturbs the input noise, our non-constrained adversarial examples are generated entirely from scratch without any constraint.

Specifically, we propose a novel attack framework called AT-GAN to estimate the distribution of adversarial examples 
that could be very close to the distribution of real data.
Extensive experiments and visualizations show that AT-GAN can generate diverse and realistic non-constrained adversarial examples efficiently. Besides, AT-GAN achieves higher attack success rates under white-box setting and exhibits moderate transferability.  

Our work of AT-GAN suggests that adversarial training, i.e., current strongest defense based on perturbation-based adversarial examples, could not guarantee the robustness against non-constrained adversarial examples.
A possible reason is that AT-GAN learns a more complete version on the distribution of adversarial examples, which is much more diverse than that of the perturbation-based method. 
Our method also offers a new way of building adversarial attacks by designing an adversarial generator directly, which may inspire more researches of this category in future work. 



\bibliography{example_paper}
\bibliographystyle{icml2020}

\newpage
~~
\newpage
\appendix
\section{Related Work}
\label{app:related_work}
\subsection{Gradient-based Attacks}
There are two types of attacks regarding how the attacks access the model. The \textit{white-box attack} can fully access the target model, while the \textit{black-box attack} \citep{Carlini2017Towards} has no knowledge of the target model. Existing black-box attacks mainly focus on \textit{transferability} \citep{Liu2017Delving,Bhagoji2017Exploring}, in which an adversarial instance generated on one model could be transferred to attack another model. 

We will introduce three typical adversarial attack methods. Here the components of all adversarial examples are clipped in $[0, 1]$.

\textbf{Fast Gradient Sign Method (FGSM).} FGSM \citep{Goodfellow2015Explaining} adds perturbation in the gradient direction of the training loss \(J\) on the input \(x\) to generate adversarial examples.
\begin{equation*}
x_{adv} = x + \epsilon \cdot sign(\nabla_x J(\theta,x,y))
\end{equation*}
Here \(y\) is the true label of a sample \(x\), \(\theta\) is the model parameter and \(\epsilon\) specifies the \(\ell_\infty \) distortion between \(x\) and \(x_{adv}\).

\textbf{Projected Gradient Descent (PGD).} PGD adversary~\citep{Madry2018Towards} is a multi-step variant of FGSM, which applies FGSM iteratively for \(k\) times with a budget \(\alpha\).
\begin{equation*}
    \begin{split}
        x_{adv_{t+1}} = \mathbf{clip}(x_{adv_t} &+ \alpha sign (\nabla_x J(\theta,x_{adv_t},y)), \\ &x_{adv_t} - \epsilon, x_{adv_t} + \epsilon)
    \end{split}
\end{equation*}
\begin{equation*}
    x_{adv_0} = x, \quad x_{adv} = x_{adv_k}
\end{equation*}
Here 
\(\mathbf{clip}(x, p, q)\) forces its input \(x\) to reside in the range of \([p, q]\).

\textbf{Rand FGSM (R+FGSM).} R+FGSM \citep{Florian2018Ensemble} first applies a small random perturbation on the benign image with a parameter \(\alpha\) (\(\alpha < \epsilon\)), then it uses FGSM to generate an adversarial example based on the perturbed image.
\begin{equation*}
\begin{split}
x_{adv} = x' + (\epsilon - \alpha) \cdot sign (\nabla_{x'} J(\theta, x', y)) \\
\text{where} \ x' = x + \alpha \cdot sign(\mathcal{N}(\mathbf{0}^d, \mathbf{I}^d))    
\end{split}
\end{equation*}

\subsection{Adversarial Training}
\label{sec:adv_training}
There are many defense strategies, such as detecting adversarial perturbations~\citep{Metzen2017Detecting}, obfuscating gradients~\citep{Buckman2018Thermometer,Guo2018Countering} and eliminating perturbations~\citep{Shen2017APEGAN, Liao2018Denoiser}, among which adversarial training is the most effective method~\citep{Athalye2018Obfuscated}. We list several adversarial training methods as follows.

\textbf{Adversarial training.} \citet{Goodfellow2015Explaining} first introduce the method of adversarial training, where the standard loss function \(f\) for a neural network is modified as:
\begin{equation*}
\tilde{J}(\theta,x,y)=\alpha J_f(\theta,x,y) + (1 - \alpha) J_f(\theta,x_{adv}, y).
\end{equation*}
Here \(y\) is the true label of a sample \(x\) and \(\theta\) is the model's parameter. The modified objective is to make the neural network more robust by penalizing it to count for adversarial samples. During the training, the adversarial samples are calculated with respect to the current status of the network. Taking FGSM for example, the loss function could be written as:
\begin{equation*}
\begin{split}
\tilde{J}(\theta,x,y)=&\alpha J_f(\theta,x,y) + (1 - \alpha) J_f(\theta,x+\\
&\epsilon sign(\nabla_x J(\theta,x,y)),y)    
\end{split}
\end{equation*}

\textbf{Ensemble adversarial training.} \citet{Florian2018Ensemble} propose an ensemble adversarial training method, in which DNN is trained with adversarial examples transferred from a number of fixed pre-trained models. 

\textbf{Iterative adversarial training.} \citet{Madry2018Towards} propose to train a DNN with adversarial examples generated by iterative methods such as PGD.

\section{Theoretical Analysis of AT-GAN}
\label{app:proof}
In this section, we provide the proof of theorems in Section \ref{sec:ana}.
\begin{thm}
Suppose $\max_{z,y} L_d(z,y) < \epsilon$, we have $KL(p_a \| p_g) \to 0$ when $\epsilon \to 0$.
\end{thm}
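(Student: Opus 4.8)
The plan is to exploit the fact that $p_g$ and $p_a$ are both pushforwards of the \emph{same} latent law $p_z$ (together with the label prior) under the two generator maps $G_{original}(\cdot,y)$ and $G_{attack}(\cdot,y)$, and that the hypothesis forces these maps to be uniformly close. First I would restate $\max_{z,y}L_d(z,y)<\epsilon$ as a uniform pointwise bound: recalling that $L_d(z,y)=\mathbb{E}_{z\sim p_z}[\|G_{original}(z,y)+\rho-G_{attack}(z,y)\|_p]$ and bounding the auxiliary noise $\rho$, this yields $\|G_{original}(z,y)-G_{attack}(z,y)\|_p=O(\epsilon)$ for every $z$ in the support of $p_z$ and every label $y$. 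In particular, as $\epsilon\to 0$ the attack generator converges uniformly to the original generator, which is exactly the ``ideal condition'' referenced in the statement.

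Next I would convert uniform closeness of the maps into convergence of the induced densities. Fix a label $y$ and assume, as part of the ideal condition, that $g:=G_{original}(\cdot,y)$ is a diffeomorphism onto its image with $a:=G_{attack}(\cdot,y)$ a nearby diffeomorphism. The change-of-variables formula gives $p_g(x\mid y)=p_z(g^{-1}(x))\,|\det J_{g^{-1}}(x)|$ and $p_a(x\mid y)=p_z(a^{-1}(x))\,|\det J_{a^{-1}}(x)|$. Since $a\to g$ uniformly, the inverse maps and their Jacobian determinants converge, so $p_a(x\mid y)\to p_g(x\mid y)$ pointwise in $x$. Consequently the integrand $p_a\log(p_a/p_g)$ of the divergence tends to $0$ pointwise and, under the regularity assumptions, is dominated by an integrable function; a dominated-convergence argument then gives $KL(p_a(\cdot\mid y)\,\|\,p_g(\cdot\mid y))\to 0$, and combining over the label prior yields $KL(p_a\|p_g)\to 0$.

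The hard part will be the second step, because uniform closeness of the \emph{values} $G_{attack}\approx G_{original}$ does not by itself control the \emph{Jacobians} that appear in the density formula, yet the KL divergence is sensitive precisely to the ratio of densities and hence to these Jacobians. A fully rigorous argument therefore needs an extra $C^1$-type hypothesis (e.g.\ that the transfer step keeps the generators close in a derivative-controlled sense, or equicontinuity of the Jacobians), which I would state explicitly as part of the ideal condition. A second subtlety is the dimension mismatch: when the latent dimension is smaller than the image dimension, both pushforwards concentrate on thin sets and admit no Lebesgue densities, so the change of variables must instead be carried out on the common supporting manifold, or one argues with a small full-dimensional noise $\rho$ that regularizes the supports; I would make this choice explicit and note that it is exactly the role of $\rho$ in $L_d$. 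Modulo these modeling assumptions, the chain ``uniform map convergence $\Rightarrow$ pointwise density convergence $\Rightarrow$ KL convergence'' closes the proof.
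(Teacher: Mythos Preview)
Your route is genuinely different from the paper's. The paper factors the argument through an auxiliary lemma: if two densities $p(x)$ and $q(x)$ satisfy $\|p(x)-q(x)\|<\epsilon$ pointwise for every $x$, then $KL(P\|Q)\to 0$ as $\epsilon\to 0$ (shown by writing $q=p+r$ with $\|r\|<\epsilon$, rearranging to $KL(P\|Q)=\int r\log(q/p)\,dx-KL(Q\|P)$, and bounding the first term by $\int\epsilon\log(1+\epsilon/p)\,dx$). The theorem itself is then dispatched in a single sentence: from $\max_{z,y}L_d<\epsilon$ the paper asserts $\|p_a(z)-p_g(z)\|<\epsilon$ for all $z$ and invokes the lemma.

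In other words, the paper treats the passage ``generators uniformly close $\Rightarrow$ induced densities uniformly close'' as immediate, which is exactly the step you single out as the hard part and attack via the change-of-variables formula with explicit Jacobian control. Your approach is more honest about what is being assumed (a $C^1$/diffeomorphism-type closeness, the manifold issue when the latent dimension is smaller than the ambient one, the regularizing role of $\rho$) and actually supplies a mechanism for density convergence before passing to KL. The paper's approach is shorter but conflates closeness of the \emph{sample values} $G_{attack}(z,y)\approx G_{original}(z,y)$ with pointwise closeness of the resulting \emph{density functions}; both arguments need essentially the same regularity to be airtight, but you make it explicit while the paper does not.
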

%
%
To prove this theorem, we first consider that given a distribution \(p(x)\) in space \(\mathcal{X}\), we construct another distribution \(q(x)\) by selecting the points \(p_\epsilon(x)\) in the \(\epsilon\)-neighborhood of \(p(x)\) for any \(x \in \mathcal{X}\). Obviously, when \(p_\epsilon(x)\) is close enough to \(p(x)\), \(q(x)\) has almost the same distribution as \(p(x)\). Formally, we can have the following lemma:
\begin{lemma}
\label{lem:dis}
Given two distributions \(P\) and \(Q\) with probability density function \(p(x)\) and \(q(x)\) in space \(\mathcal{X}\), if there exists a constant \(\epsilon\) that satisfies \(\|q(x) - p(x)\| < \epsilon\) for any \(x \in \mathcal{X}\), we could get \(KL(P\|Q) \to 0 \text{ when } \epsilon \to 0\).
\end{lemma}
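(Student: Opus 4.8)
The plan is to work directly from the definition of the Kullback--Leibler divergence and exploit the uniform pointwise closeness of the two densities. Writing $q(x) = p(x) + \delta(x)$ with $|\delta(x)| < \epsilon$ for every $x \in \mathcal{X}$, I would start from
\[
KL(P\|Q) = \int_{\mathcal{X}} p(x) \log \frac{p(x)}{q(x)} \, dx = -\int_{\mathcal{X}} p(x) \log\!\left(1 + \frac{\delta(x)}{p(x)}\right) dx ,
\]
so that the whole problem is reduced to controlling the integrand $\log(1 + \delta/p)$, which is small precisely when $\delta$ is small relative to $p$.

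The key structural fact I would use is that both $p$ and $q$ are probability densities, hence $\int_{\mathcal{X}} \delta(x)\,dx = \int_{\mathcal{X}} (q(x) - p(x))\,dx = 1 - 1 = 0$. With this in hand there are two natural routes. The first is a second-order Taylor expansion $\log(1+u) = u - \tfrac{1}{2}u^2 + o(u^2)$, which gives
\[
KL(P\|Q) = -\int_{\mathcal{X}} \delta(x)\,dx + \frac{1}{2}\int_{\mathcal{X}} \frac{\delta(x)^2}{p(x)}\,dx + o(\epsilon^2) = \frac{1}{2}\int_{\mathcal{X}} \frac{\delta(x)^2}{p(x)}\,dx + o(\epsilon^2),
\]
where the linear term vanishes exactly. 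The second, cleaner route is to apply the elementary bound $\log t \le t-1$ to obtain the standard inequality $KL(P\|Q) \le \chi^2(P\|Q) = \int_{\mathcal{X}} (p(x)-q(x))^2/q(x)\,dx \le \epsilon^2 \int_{\mathcal{X}} 1/q(x)\,dx$, which avoids any expansion error term. Either way, the divergence is controlled by a quantity of order $\epsilon^2$ multiplied by an integral of the reciprocal density.

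The hard part, and the step where I expect the main difficulty, is showing that this residual integral ($\int \delta^2/p$ or $\int 1/q$) stays finite so that the $\epsilon^2$ factor actually forces the bound to $0$. The factor $1/p(x)$ can blow up on regions where $p$ is small, so the naive estimate need not converge in full generality; this is exactly why the statement is intended under the \emph{ideal condition} of the theorem. To make the argument rigorous I would assume $\mathcal{X}$ is compact (or that $p$ has compact support) and that $p$ is bounded below by a positive constant on its support, which guarantees $\int_{\mathcal{X}} 1/p(x)\,dx < \infty$ and hence $KL(P\|Q) \le C\epsilon^2$ for a fixed constant $C$. Finally I would invoke dominated convergence to justify passing $\epsilon \to 0$ inside the integral, concluding $KL(P\|Q) \to 0$ as claimed; this then feeds back, via the construction of $q_\theta$ from the $\epsilon$-neighborhood points, into the proof of Theorem~\ref{thm:atgan}.
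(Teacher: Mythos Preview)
Your proposal is correct and, in fact, more careful than the paper about the regularity assumption needed at the end. The route, however, is genuinely different from the paper's.

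The paper does not Taylor-expand or invoke the $\chi^2$ bound. Instead it writes $q = p + r$ with $\|r\| < \epsilon$, substitutes $p = q - r$ into $KL(P\|Q) = \int p\log p - \int p\log q$, and after regrouping arrives at the identity
\[
KL(P\|Q) \;=\; \int r(x)\,\log\frac{q(x)}{p(x)}\,dx \;-\; KL(Q\|P).
\]
Since $KL(Q\|P) \ge 0$, the second term only helps, and the first is bounded above by $\int \epsilon \log\bigl(1 + \epsilon/p(x)\bigr)\,dx$, which is then asserted to vanish as $\epsilon \to 0$.

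The trade-offs: your $\chi^2$ route is the standard textbook inequality and gives an explicit $O(\epsilon^2)$ rate, but makes the dependence on $\int 1/q$ visible and forces you (rightly) to add a bounded-below / compact-support hypothesis. The paper's symmetry trick is slicker algebraically and produces a bound that is formally first order in $\epsilon$ times a logarithmic factor, but it sweeps the same integrability issue under the rug: the final integral $\int \epsilon \log(1+\epsilon/p)\,dx$ is not obviously finite or vanishing without exactly the kind of assumption you spelled out. So your version is a different, arguably more transparent derivation that is honest about the missing hypothesis the paper leaves implicit.
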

\begin{proof}
For two distributions \(P\) and \(Q\) with probability density function \(p(x)\) and \(q(x)\), we could get \(q(x) = p(x) + r(x)\) where \(\|r(x)\| < \epsilon\).
\begin{equation*}
\begin{split}
&KL(P\|Q) = \int p(x) \log\frac{p(x)}{q(x)}dx \\
&= \int p(x)\log p(x)dx - \int p(x) \log q(x) dx \\
\end{split}
\end{equation*}

\begin{equation*}
\begin{split}
&= \int (q(x)-r(x))\log p(x)dx - \\
& \quad \int (q(x)-r(x)) \log q(x) dx \\
&= \int q(x) \log p(x) dx - \int q(x) \log q(x) dx \\
& \quad - \int r(x) \log p(x) dx + \int r(x) \log q(x) dx \\
&= \int r(x) \log \frac{q(x)}{p(x)}dx - KL(Q\|P) \\
&\leq \int \epsilon \log (1 + \frac{\epsilon}{p(x)}) dx
\end{split}
\end{equation*}
Obviously, when \(\epsilon \to 0\), we could get \(\int \epsilon \log (1 + \frac{\epsilon}{p(x)}) dx \to 0\), which means \(DL(P\|Q) \to 0\).
\end{proof}
Now, we come to proof of Theorem \ref{thm:atgan}.

\begin{proof}
For two distributions $p_a$ and $p_g$, $\max_{y,z} L_d < \epsilon$ indicates $\forall z \sim p_z, \|p_a(z)-p_g(z)\| < \epsilon$. According to Lemma \ref{lem:dis}, we have $KL(p_a \| p_g) \to 0$ when $\epsilon \to 0$. This concludes the proof.
\end{proof}. 

\begin{thm}
The global minimum of the virtual training of AC-WGAN\_GP is achieved if and only if \(p_g = p_{data}\).
\end{thm}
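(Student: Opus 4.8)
The plan is to follow the classical \emph{virtual training} argument of \citet{Goodfellow2014GANs} combined with the Wasserstein optimality result of \citet{Samangouei2018DefenseGAN}. The idea is to first fix the generator $G$, solve the inner optimization over the discriminator $D$ and the auxiliary classifier $f$ to optimality, and then study the resulting criterion as a function of the generator distribution $p_g$ alone. Writing $C(G) = \max_{D}\min_{f} V(G,D,f)$ for the combined value function induced by the three losses $L_G$, $L_D$, $L_f$ (where maximizing over $D$ corresponds to minimizing $L_D$, i.e. to the Wasserstein critic, and minimizing over $f$ corresponds to minimizing the classification cross-entropy), the global minimum of the full problem is $\min_G C(G)$, so it suffices to characterize the minimizers of $C(G)$.

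First I would isolate the adversarial part of the objective, namely $\mathbb{E}_{x\sim p_{data}}[D(x)] - \mathbb{E}_{z\sim p_z}[D(G(z,y))] = \mathbb{E}_{x\sim p_{data}}[D(x)] - \mathbb{E}_{x\sim p_g}[D(x)]$. Under the \emph{virtual} (idealized) assumption that the gradient-penalty term drives the optimal discriminator into the class of $1$-Lipschitz functions, the Kantorovich--Rubinstein duality gives
\begin{equation*}
\max_{\|D\|_L \leq 1} \big( \mathbb{E}_{x\sim p_{data}}[D(x)] - \mathbb{E}_{x\sim p_g}[D(x)] \big) = W(p_{data}, p_g),
\end{equation*}
the Wasserstein-$1$ distance between $p_{data}$ and $p_g$. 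Thus the discriminator/generator game contributes exactly $W(p_{data}, p_g)$ to $C(G)$, mirroring the pure WGAN case treated by \citet{Samangouei2018DefenseGAN}.

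Next I would account for the classifier terms $\mathbb{E}_{x\sim p_{data}}[H(f(x),y)] + \mathbb{E}_{z\sim p_z}[H(f(G(z,y)),y)]$. The key observation is that when the generator matches the data distribution, the generated-sample and real-sample cross-entropy contributions coincide and are simultaneously minimized by the same Bayes-optimal classifier, so these terms are compatible with---and do not obstruct---the vanishing of the Wasserstein term. Conversely, since $W(p_{data}, p_g) \geq 0$ with equality if and only if $p_{data} = p_g$, the Wasserstein contribution \emph{alone} forces any minimizer of $C(G)$ to satisfy $p_g = p_{data}$, and at this point the classifier terms attain their optimum as well. Combining the two directions yields the claimed ``if and only if''.

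The hard part will be making the \emph{virtual training} idealization precise: one has to argue that, in the optimal regime, the gradient-penalty regularizer in $L_D$ is equivalent to the hard $1$-Lipschitz constraint required by Kantorovich--Rubinstein duality, and that the presence of the classifier objective $L_f$ introduces neither spurious minimizers with $p_g \neq p_{data}$ nor an obstruction to attaining the minimum when $p_g = p_{data}$. Once the adversarial term is correctly identified as $W(p_{data},p_g)$, the remaining uniqueness argument is inherited directly from the nonnegativity and definiteness of $W(\cdot,\cdot)$.
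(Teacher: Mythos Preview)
Your argument is sound in outline but takes a genuinely different route from the paper's. The paper does \emph{not} invoke Kantorovich--Rubinstein duality or the Wasserstein distance at any point. Instead it first conditions on a fixed label $y$, so that the AC-WGAN\_GP objective collapses to a plain WGAN\_GP objective per class; this is how the classifier $f$ is removed, rather than by your Bayes-optimality argument. Then, following \citet{Samangouei2018DefenseGAN}, the paper writes $V(D,G)$ as $\int (p_{data}-p_g)D\,dx$ minus the penalty term and argues that for fixed $G$ the maximizing discriminator is the \emph{indicator} $D^*_G(x)=\mathbf{1}[p_{data}(x)\ge p_g(x)]$. Substituting this back yields $V(D^*,G)=\int_{\{p_{data}\ge p_g\}}(p_{data}-p_g)\,dx-\lambda$, and a short measure-theoretic step (both densities integrate to $1$, so equality on $\{p_{data}\ge p_g\}$ forces the complement to be Lebesgue-null) gives $p_g=p_{data}$ at the minimum.

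What each buys: your Wasserstein route is cleaner and more faithful to the intended role of the gradient penalty, and the ``if and only if'' falls out immediately from the definiteness of $W(\cdot,\cdot)$; its cost is the explicit idealization that the penalty exactly enforces the $1$-Lipschitz constraint, plus the side argument that the classifier terms neither block nor create minimizers. The paper's approach is more elementary---no duality, just an explicit $D^*$ and an integration-to-one trick---but its indicator discriminator is discontinuous, so the handling of the gradient-penalty term there is only formal.
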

\vspace{-1em}
\begin{proof}
To simplify the analysis, we choose a category $y$ of AC-WGAN\_GP and denote $p_g(x|y)$ and $p_{data}(x|y)$ the distribution the generator learns and the distribution of real data respectively. Then for each category, the loss function is equivalent to WGAN\_GP. We refers to \citet{Samangouei2018DefenseGAN} to prove this property. The WGAN\_GP min-max loss is given by:
\begin{equation}
\begin{split}
&\min_G \max_D V(D,G) = \mathbb{E}_{x \sim p_{data}(x)}[D(x)] -\\
& \quad \mathbb{E}_{z \sim p_z(z)}[D(G(z))] - \lambda \mathbb{E}_{\hat{x} \sim p_{\hat{x}}(\hat{x})}[(\|\nabla_{\hat{x}}D(\hat{x})\|_2 - 1)^2]  \\
&= \int_x p_{data}(x)D(x)dx - \int_z p_z(z) D(G(z)) dz\\
& \quad - \lambda \int_{\hat{x}} p_{\hat{x}}(\hat{x}) [(\|\nabla_{\hat{x}}D(\hat{x})\|_2 - 1)^2] d\hat{x} \\
&= \int_x [p_{data}(x) - p_g(x)]D(x)dx \\
& \quad - \lambda \int_{\hat{x}} p_{\hat{x}}(\hat{x}) [(\|\nabla_{\hat{x}}D(\hat{x})\|_2 - 1)^2] d\hat{x} \\
\end{split}
\label{WGANGPloss}
\end{equation}
For a fixed \(G\), the optimal discriminator \(D\) that maximizes \(V(D, G)\) should be:
\begin{equation}
D_G^*(x)=\left\{\begin{array}{l}
1 \qquad \text{if }p_{data}(x) \geq p_g(x) \\
0 \qquad \text{otherwise}\\
\end{array} \right.
\label{OptimalD}
\end{equation}

According to Eq. \ref{WGANGPloss} and Eq. \ref{OptimalD}, we could get:
\begin{equation}
\begin{split}
&V(D,G) =\int_x [p_{data}(x) - p_g(x)]D(x)dx \\
& \quad - \lambda \int_{\hat{x}} p_{\hat{x}}(\hat{x}) [(\|\nabla_{\hat{x}}D(\hat{x})\|_2 - 1)^2] d\hat{x} \\
&= \int_{\{x|p_{data}(x)\geq p_g(x)\}} (p_{data}(x) - p_g(x))dx\\
& \quad - \lambda \int_{\hat{x}} p_{\hat{x}}(\hat{x}) d\hat{x} \\
&= \int_{\{x|p_{data}(x)\geq p_g(x)\}} (p_{data}(x) - p_g(x))dx - \lambda \\
\end{split}
\label{WGANGPD}
\end{equation}

Let \(\mathcal{X} = \{x|p_{data}(x)\geq p_g(x)\}\), in order to minimize Eq. \ref{WGANGPD}, we set \(p_{data}(x)=p_g(x)\) for any \(x \in \mathcal{X}\). Then, since both \(p_g\) and \(p_{data}\) integrate to 1, we could get:
\begin{equation*}
\int_{\mathcal{X}^c} p_g(x)dx = \int_{\mathcal{X}^c} p_{data}(x)dx.
\end{equation*}

However, this contradicts Eq. \ref{OptimalD} where \(p_{data}(x) < p_g(x)\) for \(x \in \mathcal{X}^c\), unless \(\mu(\mathcal{X} ^c) = 0\) where \(\mu\) is the Lebesgue measure. 

Therefore, for each category we have $p_g(x|y) = p_{data}(x|y)$ which means $p_g(x) = p_{data}(x)$ for AC-WGAN\_GP.
\end{proof}

\section{More Details on Experiments}
\label{app:exp}
\subsection{More Experimental Setup}
\label{sec:ExpDetails}
Here we describe the details about the experimental setup, including the model architectures and attack hyper-paramters.

\textbf{Model Architectures.}~\label{sec:architecture}
We first describe the neural network architectures used in experiments. The abbreviations for components in the network are described in Table \ref{tab:Abbreviation}. 
The architecture of AC-WGAN\_GP for MNIST and Fashion-MNIST is shown in Table \ref{GANDetails} where the generator and discriminator are the same as in \citet{Chen2016InfoGAN}, while the architecture of AC\_WGAN\_GP for CelebA is same as in \citet{Gulrajani2017WGANGP}. The details of \textsf{Model A} through \textsf{D} are described in Table \ref{ModelDetails}. Both \textsf{Model A} and \textsf{Model C} have convolutional layers and fully connected layers. The difference is only on the size and number of convolutional filters. \textsf{Model B} uses dropout as its first layer and adopts a bigger covolutional filter so that it has less number of parameters. \textsf{Model D} is a fully connected neural network with the least number of parameters and its accuracy will be lower than other models. VGG in our experiments is the same as VGG16 in \citet{Simonyan2015VGG} and we adopt ResNet from \citet{Song2018Constructing}.

\begin{table*}[htb]
  \caption{The abbreviations for network architectures.}
  \label{tab:Abbreviation}
  \vspace{1em}
  \centering
  \scalebox{0.95}{
  \begin{tabular}{cc}
    \toprule
   Abbreviation & Description\\
   \midrule
   Conv(\(m\text{, } k \times k\)) & A convolutional layer with \(m\) filters and filter size \(k\)\\
   DeConv(\(m\text{, } k \times k\)) & A transposed convolutional layer with \(m\) filters and filters size \(k\)\\
   Dropout(\(\alpha\)) & A dropout layer with probability \(\alpha\)\\
   FC(\(m\)) & A fully connected layer with \(m\) outputs\\
   Sigmoid & The sigmoid activation function\\
   Relu & The Rectified Linear Unit activation function\\
   LeakyRelu(\(\alpha\)) & The Leaky version of a Rectified Linear Unit with parameter \(\alpha\)\\
	Maxpool(\(k\text{,} s\)) & The maxpooling with filter size \(k\) and stride \(s\)	\\
   \bottomrule
  \end{tabular}
  }
\end{table*}


\begin{table*}[htb]
  \caption{The architecture of WGAN\_GP with auxiliary classifier.}
  \label{GANDetails}
  \vspace{1em}
  \centering
  \scalebox{0.95}{
  \begin{tabular}{ccc}
    \toprule
   Generator & Discriminator & Classifier\\
   \midrule
	FC(\(1024\)) + Relu & Conv(\(64\text{, } 4\times 4\)) + LeakyRelu(\(0.2\)) & Conv(\(32\text{, } 3\times3\)) + Relu\\
   FC(\(7\times 7 \times 128\)) + Relu & Conv(\(128\text{, } 4\times 4\)) + LeakyRelu(\(0.2\)) & pooling(\(2\text{, } 2\))\\
   DeConv(\(64\text{, } 4 \times 4\)) + Sigmoid & FC(\(1024\)) + LeakyRelu(\(0.2\)) & Conv(\(64\text{, } 3 \times 3\)) + Relu\\
   DeConv(\(1 \text{, } 4 \times 4\)) + Sigmoid & FC(1) + Sigmoid& pooling(\(2\text{, }2\))\\
   & & FC(\(1024\))\\
   & & Dropout(\(0.4\))\\
   & & FC(\(10\)) + Softmax\\
   \bottomrule
  \end{tabular}
  }
\end{table*}

\begin{table*}[htb]
  \caption{The architectures of \textbf{Models A} through \textbf{D} and CNN used for classification. After each model's name we put the number of parameters of that model.}
  \label{ModelDetails}
  \vspace{1em}
  \centering
  \begin{tabular}{ccccc}
    \toprule
   Model A (\(3\text{,}382\text{,}346\)) & Model B (\(710\text{,}218\)) & Model C (\(4\text{,}795\text{,}082\)) \\
   \midrule
	Conv(\(64\text{, } 5 \times 5\))+Relu & Dropout(\(0.2\)) & Conv(\(128\text{, } 3\times 3\))+Relu\\

   Conv(\(64\text{, } 5\times 5\))+Relu & Conv(~~\(64\text{, } 8\times 8\))+Relu & Conv(~~\(64\text{, } 3\times 3\))+Relu\\
   Dropout(\(0.25\)) & Conv(\(128\text{, } 6\times 6\))+Relu & Dropout(\(0.25\))\\
   FC(\(128\))+Relu & Conv(\(128\text{, } 5\times 5\))+Relu & FC(\(128\))+Relu\\
   Dropout(\(0.5\)) & Dropout(\(0.5\)) & Droopout(\(0.5\))\\
   FC(\(10\))+Softmax & FC(\(10\))+Softmax & FC(\(10\))+Softmax\\
   \midrule
   Model D (\(509\text{,}410\)) & CNN (\(17,066,658\)) &  \\
   \midrule
   \multirow{2}{*}{\(\begin{bmatrix}
   \text{FC(300)+Relu} \\
   \text{Dropout(0.5)}
  \end{bmatrix} \times 4\)}& Conv(\(32\text{, } 3 \times 3\))+Relu & &\\
   & Conv(\(32\text{, } 3 \times 3\))+Relu & &\\
   & Dropout(\(0.3\)) & &\\
   FC(\(10\)) + Softmax & Conv(\(64\text{, } 3 \times 3\))+Relu & &\\
   & Conv(\(64\text{, } 3 \times 3\))+Relu & &\\
   & Maxpool(\(2\text{, }2\)) + Dropout(\(0.3\)) & &\\
   & Conv(\(128\text{, } 3 \times 3\))+Relu & &\\
   & Conv(\(128\text{, } 3 \times 3\))+Relu & &\\
   & Maxpool(\(2\text{, }2\)) + Dropout(\(0.3\)) & &\\
   & FC(\(512\)) + Relu & &\\
   & Dropout(\(0.3\)) & &\\
   & FC(\(10\)) + Softmax & &\\
   \bottomrule
  \end{tabular}
\end{table*}

\textbf{Hyper-parameters.}\label{sec:hyperparamters}~
The hyper-parameters used in experiments for each attack method are described in Table \ref{Hyperparamter}.
\begin{table*}[htb]
  \caption{Hyper-paramters of different attack methods on the datasets.}
  \label{Hyperparamter}
  \vspace{1em}
  \centering
  \scalebox{0.9}{
  \begin{tabular}{ccccc}
    \toprule
   \multirow{2}{*}{Attack} & \multicolumn{3}{c}{Datasets} \\
\cmidrule(r){2-4}
    & MNIST & Fashion-MNIST & CelebA & Norm\\
   \midrule
   FGSM & \(\epsilon=0.3\) & \(\epsilon=0.1\) & \(\epsilon=0.015\) & \(\ell_\infty\)\\
   PGD	& \(\epsilon=0.3\text{, } \alpha=0.075\text{,  epochs}=20\) & \(\epsilon=0.1\text{, } \alpha=0.01\text{,  epochs}=20\) & \(\epsilon=0.015\text{, } \alpha=0.005\text{,  epochs}=20\) & \(\ell_\infty\)\\
   R+FGSM & \(\epsilon=0.3\text{, } \alpha=0.15\) & \(\epsilon=0.2\text{, } \alpha=0.1\) & \(\epsilon=0.015\text{, } \alpha=0.003\) & \(\ell_\infty\)\\
   Song's & \(\lambda_1=100 \text{, } \lambda_2=0\text{, epochs}=200 \)& \(\lambda_1=100 \text{, } \lambda_2=0\text{, epochs}=200 \) & \(\lambda_1=100 \text{, } \lambda_2=100\text{, epochs}=200 \) & N/A\\
   AT-GAN & \(\alpha=2\text{, } \beta=1\text{, epochs}=100\) & \(\alpha=2\text{, } \beta=1\text{, epochs}=100\) & \(\alpha=3\text{, } \beta=2\text{, epochs}=200\) & N/A\\
   \bottomrule
  \end{tabular}
  }
\end{table*}

\subsection{Non-constrained Adversarial Examples for Target Attack}
\label{AdvExamples}
Here we show some non-constrained adversarial examples generated by AT-GAN for target attack.
The results are illustrated in Figure \ref{fig:target_result} and \ref{fig:target_celeba}.
Instead of adding perturbations to the original images, AT-GAN transfers the generator so that the generated adversarial instances are not in the same shape of the initial examples (in diagonal) generated by the original generator.
\begin{figure*}[htb]
\subfigure[MNIST]{
\includegraphics[scale=0.88]{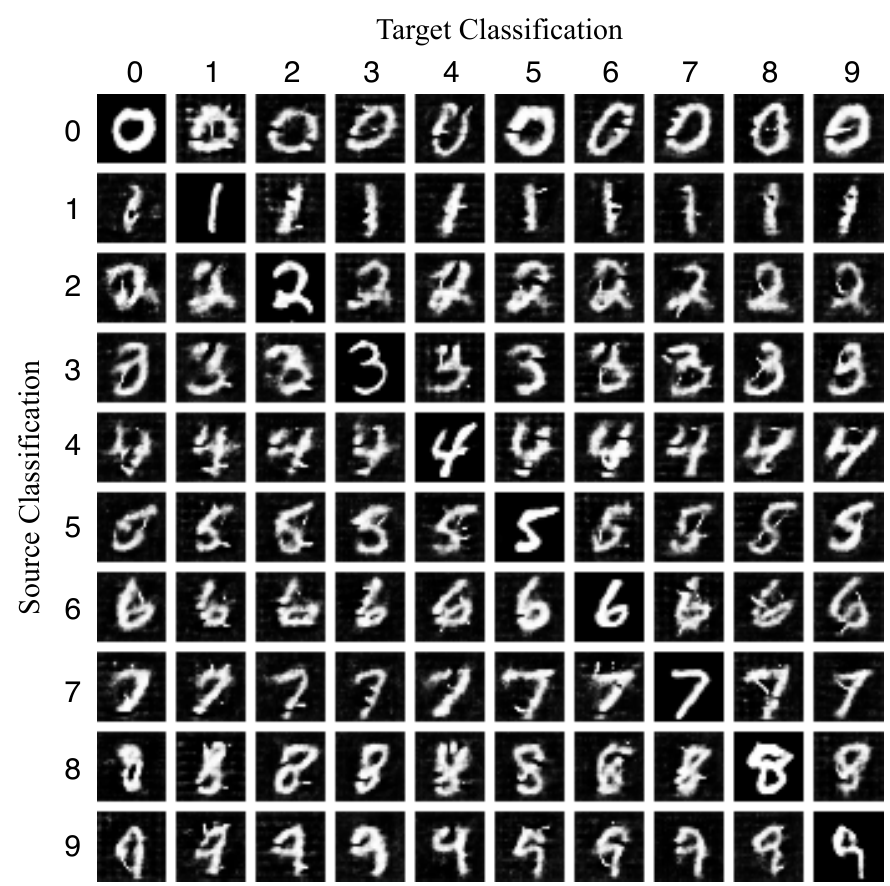}
}
\subfigure[Fashion-MNIST]{
\includegraphics[scale=0.88]{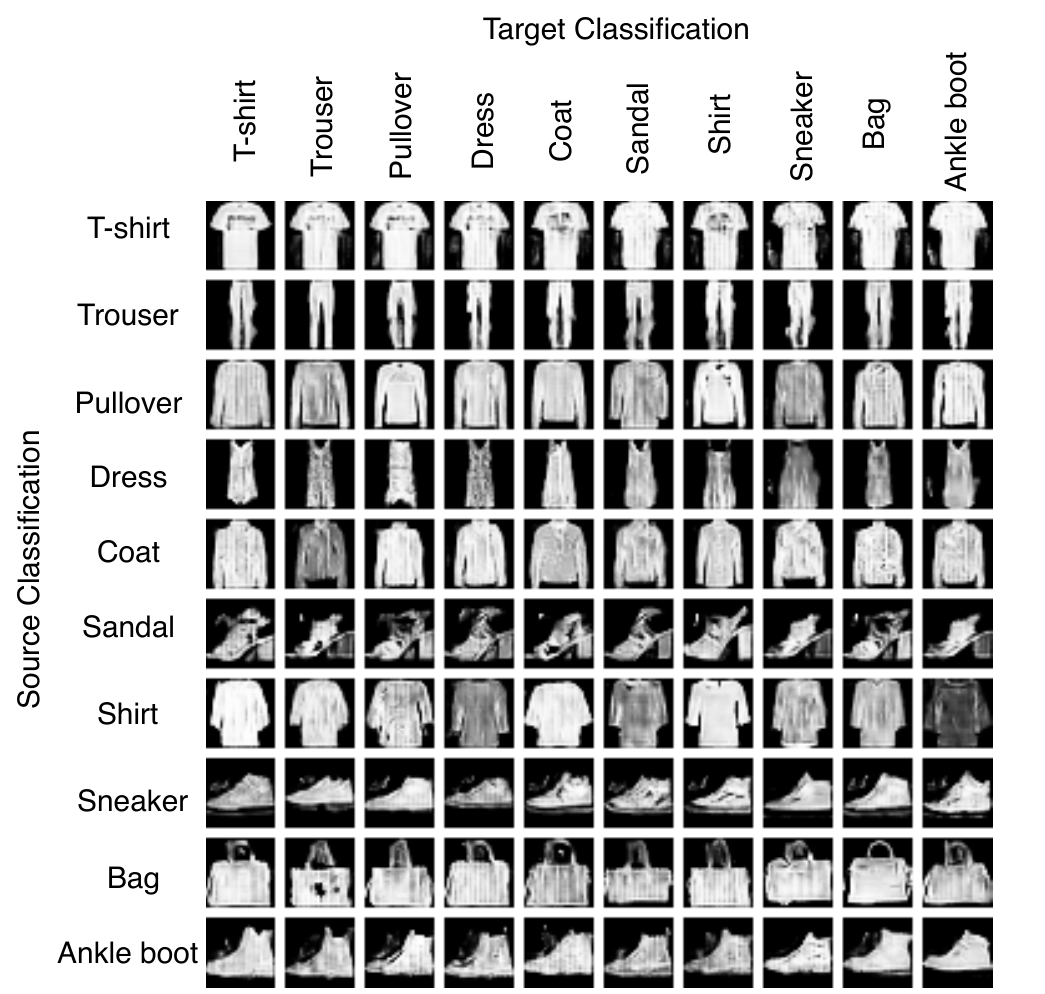}
}
\caption{Adversarial examples generated by AT-GAN to different targets with the same random noise input for each row. The images on the diagonal are generated by \(G_{original}\) which are not adversarial examples and treated as the initial instances for AT-GAN.}
\label{fig:target_result}
\end{figure*}
\begin{figure*}[htb]
    \centering
    \includegraphics[scale=0.5]{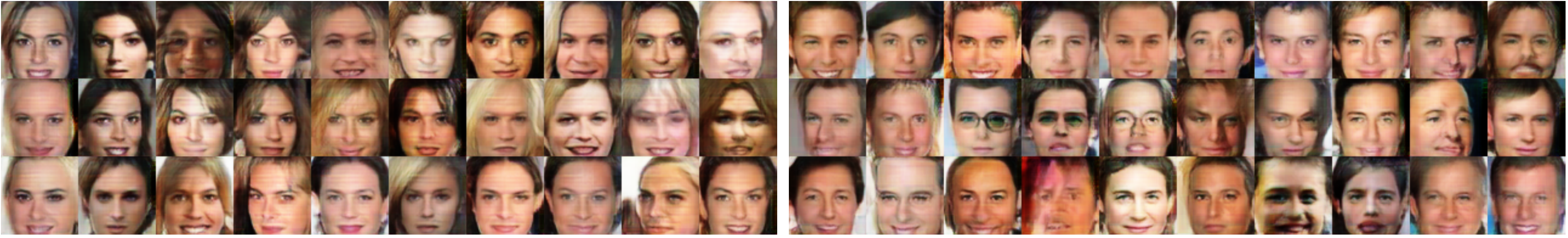}
    \caption{Adversarial examples generated by AT-GAN on CelebA dataset. }
    \label{fig:target_celeba}
\end{figure*}
\end{document}